\DeclareMathOperator*{\argmax}{arg\,max}
\newcommand{\x}{\mathbf{x}}
\newcommand{\y}{\mathbf{y}}
\newcommand{\prob}{\textrm{P}}
\newcommand{\norm}[1]{\left\lVert#1\right\rVert}
\newcommand{\ubar}[1]{\mkern3mu\underline{\mkern-3mu #1\mkern-3mu}\mkern3mu}
\def\XXint#1#2#3{{\setbox0=\hbox{$#1{#2#3}{\int}$ }
		\vcenter{\hbox{$#2#3$ }}\kern-.6\wd0}}
\newtheorem{theorem}{Theorem}
\newtheorem*{theorem*}{Theorem}
\newtheorem{assumption}{Assumption}
\newtheorem{corollary}{Corollary}
\newtheorem*{corollary*}{Corollary}
\newtheorem*{lemma*}{Lemma}
\newtheorem{proposition}{Proposition}
\newtheorem*{proposition*}{Proposition}
\newtheorem*{improvement*}{Improvement}
\newtheorem{definition}{Definition}
\newtheorem{remark}{Remark}
\renewcommand\labelenumi{(\roman{enumi})}
\renewcommand\theenumi\labelenumi
\begin{document}

\twocolumn[

\aistatstitle{Multi-Fidelity Bayesian Optimization with Unreliable Information Sources}

\aistatsauthor{ Petrus Mikkola \And Julien Martinelli \And  Louis Filstroff \And  Samuel Kaski}

\aistatsaddress{Aalto University \And  Aalto University \And ENSAI, CREST \And Aalto University\\ University of Manchester } ]

\begin{abstract}
Bayesian optimization (BO) is a powerful framework for optimizing black-box, expensive-to-evaluate functions. Over the past decade, many algorithms have been proposed to integrate cheaper, lower-fidelity approximations of the objective function into the optimization process, with the goal of converging towards the global optimum at a reduced cost. This task is generally referred to as multi-fidelity Bayesian optimization (MFBO). However, MFBO algorithms can lead to higher optimization costs than their vanilla BO counterparts, especially when the low-fidelity sources are poor approximations of the objective function, therefore defeating their purpose. To address this issue, we propose rMFBO (robust MFBO), a methodology to make any GP-based MFBO scheme robust to the addition of unreliable information sources. rMFBO comes with a theoretical guarantee that its performance can be bound to its vanilla BO analog, with high controllable probability. We demonstrate the effectiveness of the proposed methodology on a number of numerical benchmarks, outperforming earlier MFBO methods on unreliable sources. We expect rMFBO to be particularly useful to reliably include human experts with varying knowledge within BO processes.
\end{abstract}

\section{INTRODUCTION}

Bayesian optimization (BO) has become a popular framework for global optimization of black-box functions, especially when they are expensive to evaluate \citep{Jones1998,brochu2010tutorial}. Such functions have neither known functional form nor derivatives, and conventional optimization techniques such as gradient descent cannot be directly employed. BO rests upon two key elements. First, it constructs a probabilistic surrogate model of the objective function with built-in uncertainty estimates, typically a Gaussian process (GP), based on evaluations of the function. The obtained surrogate is then used to select the next point to evaluate by maximizing of a so-called \emph{acquisition function}, which quantifies the expected utility of evaluating a specific point with the purpose of optimizing the black-box function. Many off-the-shelf acquisition functions achieve this task while balancing exploration and exploitation. Iterating these two steps produces a sequence of designs whose aim is to converge to the global optimum using a limited number of function queries. BO has proven effective for a variety of problems, including hyperparameter optimization \citep{snoek2012practical}, materials science \citep{zhang2020bayesian}, and drug discovery \citep{bombarelli2018lbo,korovina2020chembo}.

In many scenarios, lower-fidelity approximations of the objective function are available at a cheaper query cost. This occurs for instance when the evaluation of the objective function involves a numerical scheme, where computational cost and accuracy can be traded off. Another example is the knowledge of domain experts. Indeed, practitioners may have implicit knowledge of the objective function, for instance, they may be able to point out good candidate regions on the location of the global optimum \citep{hvarfner2022pibo}. Such knowledge may naturally be considered as a low-fidelity version of the true objective function.

The problem of integrating these auxiliary information sources (ISs) to reduce the cost of BO has been tackled in the literature under the name multi-fidelity Bayesian optimization (MFBO) \citep{huang2006sequential,kandasamy2016,zhang2017information,sen2018multi,song2019general,takeno2020multi,li2020multi,moss21gibbon} when the different sources can be ranked by their degree of fidelity; when this is not possible, the problem has been studied as multi-task BO \citep{swersky2013multi}, non-hierarchical multi-fidelity BO \citep{lam2015multifidelity}, or multi-information source BO \citep{poloczek2017multi}. However, as we will empirically demonstrate, state-of-the-art MFBO algorithms can fail when the auxiliary ISs are poor approximations of the primary IS. More precisely, for a fixed budget, these algorithms will lead to a higher regret w.r.t.\ their single-fidelity counterparts (i.e., vanilla BO, which uses the primary IS only), defeating their purpose. For instance, the guarantees of \citet{kandasamy2016} require that the deviation between an auxiliary IS and the primary IS is bounded by a constant known beforehand, which hardly ever holds in practice, for example when working with a human expert, or when experimenting with simulations to find the optimal control parameters for a robotic system~\citep{marco2017virtual}.

Surprisingly enough, this issue has not formally been addressed in the BO literature so far. We fix this gap by introducing rMFBO, a methodology to make any GP-based MFBO algorithm \textit{robust} to the addition of unreliable information sources. More precisely, rMFBO comes with a theoretical guarantee that its performance can be bound to its vanilla BO analog, with high controllable probability. To the best of our knowledge, rMFBO is the first MFBO scheme providing such performance guarantees. We then proceed to demonstrate the effectiveness of the proposed methodology on various numerical settings using different MFBO algorithms of the literature. Through its building block nature, rMFBO paves the way towards a more systematic usage of auxiliary ISs independently of their degree of fidelity, allowing human experts to join the optimization process in a reliable manner.

\section{PRELIMINARIES}

\subsection*{Gaussian process regression}

We begin by introducing the notation for single-output Gaussian process regression, the probabilistic surrogate upon which BO rests. Consider a dataset $\mathcal{D} = \{(\x_1,y_1),\dots,(\x_n,y_n)\}$ with $(\x_i, y_i) \in \mathbb{R}^d\times \mathbb{R}$, for which we want to learn a model of the form $y_i = f(\x_i) + \epsilon$, with $\epsilon \sim \mathcal{N}(0, \sigma^2_{\text{noise}})$ for all $i$. We may place a (zero-mean) GP prior on $f$:
\begin{equation}
    f(\x) \sim \mathcal{GP}(0, k(\x,\x')).
\end{equation}
This defines a distribution over functions $f$ whose mean is $\mathbb{E}[f(\x)] = 0$ and covariance $\text{cov}[f(\x),f(\x')] = k(\x,\x')$. Here, $k$ is a kernel function measuring the similarity between inputs. Consequently, for any finite-dimensional collection of inputs $(\x_1, \dots, \x_n)$, the function values $\boldsymbol{f} = (f(\x_1),\dots,f(\x_n))^{\text{T}}\in \mathbb{R}^n$ follow a multivariate normal distribution $\boldsymbol{f}\sim \mathcal{N}(\boldsymbol{0}, K)$, where $K \in \mathbb{R}^{n\times n} = (k(\x_i, \x_j))_{1\le i,j \le n}$ is the kernel matrix.

Given $\mathcal{D}$, the posterior predictive distribution $p(f(\x) \mid \mathcal{D})$ is Gaussian for all $\x$ with mean $\mu(\x)$ and variance $\sigma^2(\x)$, such that
\begin{align*}
\mu(\x) &= \boldsymbol{k}_{\x} (K+\sigma^2_{\text{noise}}I)^{-1}\mathbf{y}, \\
\sigma^2(\x) &=k(\x, \x) - \boldsymbol{k}_{\x} (K+\sigma^2_{\text{noise}}I)^{-1}\boldsymbol{k}_{\x},
\label{eq:gp_mean_cov}
\end{align*}
where $\mathbf{y} = [y_1,\dots,y_n] \in \mathbb{R}^n$ and $\boldsymbol{k}_{\x} = [k(\x, \x_1),\cdots, k(\x, \x_n)]^{\text{T}} \in \mathbb{R}^n$.

\subsection*{Multi-output Gaussian process regression}

GPs can be extended to multi-output Gaussian processes (MOGP), modeling any collection of $m$-sized vector-valued outputs $(\y_1,...,\y_n)$ based on inputs $(\x_1,...,\x_n)$ as a multivariate normal distribution. One way to achieve this extension is through the addition of a $(d+1)^{\text{th}}$ dimension to the input space, representing the output index $1\le l \le m$. This enables treating a MOGP as a single-output GP acting on the augmented space $\mathbb{R}^{d+1}$ through the kernel $k((\x,\ell),(\x',\ell'))$. The latter can, for instance, take the separable form $k((\x,\ell),(\x',\ell')) = k_{\text{input}}(\x,\x') \times k_{\text{IS}}(\ell,\ell')$. In particular, this setting allows for the use of the readily-available analytical formulae for the posterior mean and variance of single-output GPs.

\subsection*{Problem setup}

We consider the problem of optimizing a black-box
function $f^{(m)} : \mathcal{X} \rightarrow \mathbb{R}$, where $\mathcal{X}$ is a subset of $\mathbb{R}^d$, i.e.\, solving
\begin{equation}\label{BO_task}
\argmax_{\x \in \mathcal{X}} f^{(m)}(\x).
\end{equation}
In addition to $f^{(m)}$ (the \textit{primary} IS), we may also query $m-1$ other auxiliary functions (\textit{auxiliary} ISs), $f^{(\ell)} : \mathcal{X} \rightarrow \mathbb{R}$, where $\ell \in \llbracket m - 1 \rrbracket$ denotes the IS index. The cost of evaluating $f^{(\ell)}(\x)$ is $\lambda_{\ell}$ for any $\x \in \mathcal{X}$. We assume that $\lambda_{\ell} < \lambda_{m}$ for any auxiliary IS $\ell \in \llbracket m - 1 \rrbracket$. The objective is to solve \eqref{BO_task} within the budget $\Lambda > 0$.

\subsection*{Bayesian optimization}

At each round $t$, an input-IS pair $(\x,\ell) \in \mathcal{X} \times \llbracket m \rrbracket$ is selected by maximizing the acquisition function $\alpha$, which depends on the GP surrogate model on $f$ given all the data acquired up until round $t-1$:
\begin{equation}
\x_t, \ell_t = \argmax_{(\x,\ell) \in \mathcal{X} \times \llbracket m \rrbracket} \alpha(\x,\ell).
\label{eq-mfbo}
\end{equation}
Querying for $f^{(\ell)}(\x)$ returns a noisy observation $y_{\x}^{(\ell)} = f^{(\ell)}(\x) + \epsilon$, with i.i.d. noise $\epsilon \sim \mathcal{N}(0,\sigma_{noise}^2)$. We refer to the sequence of queries $\{\x_{t}\}_{t=1}^T$ returned by a BO algorithm as an \textit{acquisition trajectory}.

Note that vanilla BO (i.e., BO with the primary IS only) amounts to using the acquisition function $\x \mapsto \alpha(\x,m)$, which will be referred to as single-fidelity BO (SFBO) from now on.

Recall that we wish to optimize $f^{(m)}$ within the budget $\Lambda$. In this scenario, the performance metric of interest is the regret of the algorithm, whose definition is recalled below.

\begin{definition}[BO regret]
The regret of the BO algorithm that spends $\Lambda_{\text{cost}}$ and returns the final choice $\x_{\text{choice}}$, is defined by
\begin{align*}
    R(\Lambda_{\text{cost}},\x_{\text{choice}}) :=
    \begin{cases}
    f^* - f^{(m)}(\x_{\text{choice}})~~\text{if}~\Lambda_{\text{cost}} \leq \Lambda, \\
    \infty \qquad\qquad\qquad\qquad \text{otherwise}
    \end{cases}
\end{align*}
where $f^* = \max_{\x \in \mathcal{X}}f^{(m)}$ is the global maximum of the primary IS.
\end{definition}

\begin{definition}[Number of queries]\label{n_queries}
Let $T := \lfloor \Lambda / \lambda_m \rfloor$ be the available number of primary IS queries. Let $T^{(\ell)}$ be the random variable describing the number of $\ell^{th}$-IS queries spent by the MFBO algorithm.
\end{definition}

There are two popular choices for $\x_{\text{choice}}$. First, the \textit{Bayes-optimal choice}
\begin{equation}
\x_{\text{choice}} = \argmax_{\x \in \mathcal{X}}\mu(\x,m), \notag 
\end{equation}
where $\mu(\x,m)$ is the posterior mean of the GP model given all the data up to the final query $T_{\text{last}} = \sum_{\ell = 1}^m T^{(\ell)}$. The regret in this case is called the \textit{inference regret}. Second, the \textit{simple choice}
\begin{equation}
\x_{\text{choice}} = \argmax_{t \in \llbracket T^{(m)} \rrbracket}f^{(m)}(\x_t), \notag
\end{equation}
where $(\x_1,...,\x_{T^{(m)}})$ is the primary IS acquisition trajectory returned by the MFBO algorithm. The regret in that case is called the \textit{simple regret}.

Lastly, we provide an informal definition for an unreliable IS. To the best of our knowledge, no formal definition has been proposed in the literature so far, in view of the non-trivial character of the task. Broadly speaking, an auxiliary IS $\ell$ is said to be \textit{unreliable}, if querying $f^{(\ell)}(\x)$ does not lead to a decreasing inference regret when averaged over all sequences of queries $\x \in \mathcal{X}$ and all datasets $\mathcal{D}$ of any size. As such, the relevance on an IS then also depends on the MOGP model chosen.

\section{RELATED WORK}

As discussed in the introduction, many different multi-fidelity extensions of Bayesian optimization have been proposed in the literature; we refer the interested reader to \citet[Section 5]{takeno2020multi} for a review. The closest to our work are methods that do not assume a hierarchy between the sources (e.g., when the degree of fidelity cannot be assessed in advance), as by \citet{lam2015multifidelity}, where the focus lies in designing a GP model that takes into account the non-hierarchical nature of the sources. The multi-fidelity kernel introduced by~\citet{poloczek2017multi} (see Supplementary \ref{supp_mf-kernels}) is one example of such a design.

Surprisingly, the problem of the potential performance degradation of MFBO algorithms has been largely ignored in the literature, with the exception of \citet{kandasamy2016}, who noticed that their multi-fidelity method performed poorly compared to all single-fidelity variants in one experiment \citep[Supplementary D.3]{kandasamy2016}.

Lastly, we mention that robustness has been studied for vanilla BO in the context of (sometimes adversarially) noisy inputs or outputs \citep{martinez2018practical,bogunovic2018adversarially,frohlich2020noisy,kirschner2021bias}. This notion of robustness is fundamentally different from ours, indeed, we wish to provide guarantees that the addition of an auxiliary IS (or several) will not lead to worse performance w.r.t.\ vanilla BO.

\section{PITFALLS OF MFBO METHODS}\label{pitfalls}

\begin{figure*}[t]
	\begin{center}
		\includegraphics[width=\textwidth]{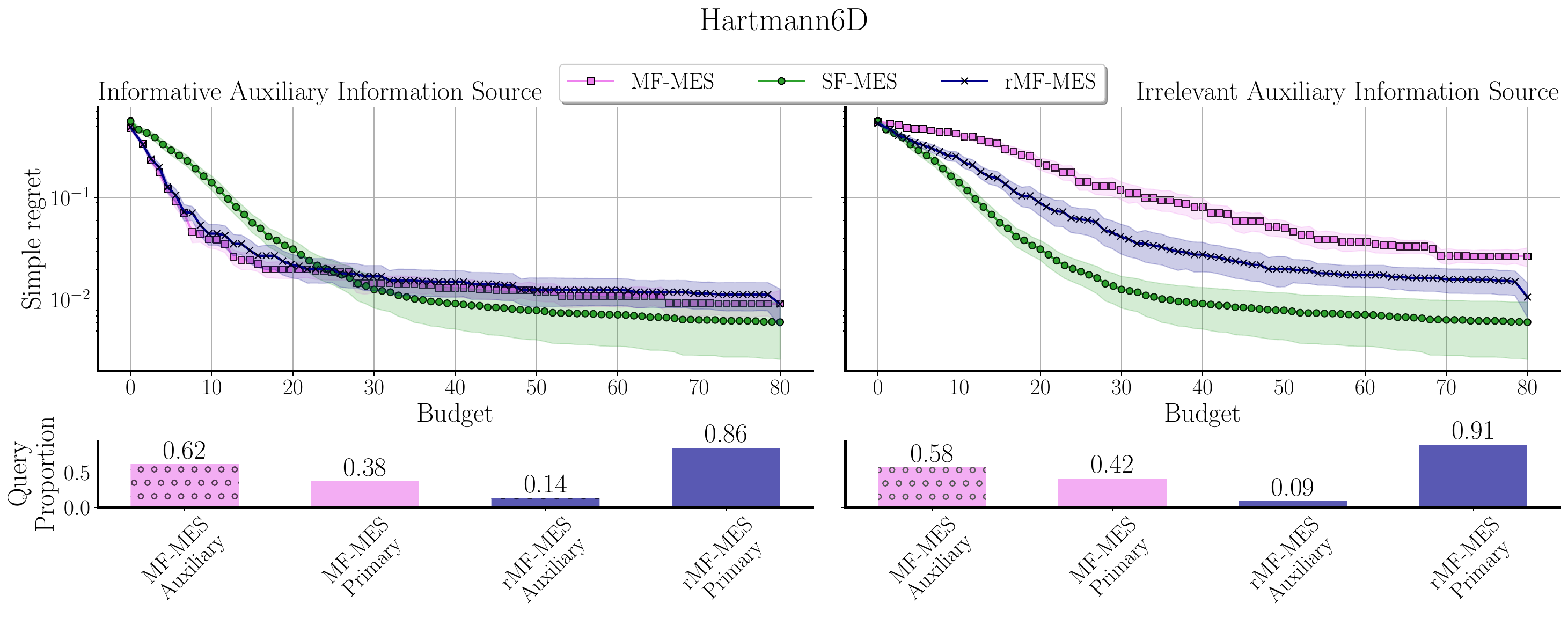}
	\end{center}
	\caption{Simple regret as a function of budget spent in two multi-fidelty problems, averaged over 100 repetitions. The informative auxiliary IS helps reduce the cost of BO (left panel: MF-MES in purple reduces regret faster than SF-MES in green), whereas the irrelevant IS catastrophically disrupts performance (right panel: MF-MES does not reach the low regret of SF-MES at all). In both settings, the primary IS cost is set to 1 and the auxiliary IS cost to 0.2. From relevant to irrelevant IS, the proportion of auxiliary IS queries remains high for MF-MES, while rMF-MES is more consistent (lower panel).} \label{mockup_fig}
\end{figure*}

We now demonstrate, on a simple example, the influence of the auxiliary IS quality on the performance of MFBO algorithms.
Let us consider the Hartmann6D function as the objective (i.e., the primary IS). We examine two scenarios: in the first one, the auxiliary IS is informative, consisting of a biased version of the primary IS, with a degree of fidelity $l=0.2$.
In the second scenario, the auxiliary IS is taken to be the 6-dimensional Rosenbrock function, an irrelevant source for this problem. Analytical forms for these examples can be found in Supplementary~\ref{sec:testfunctions}.
We evaluate the multi-fidelity maximum-entropy search (MF-MES) method from \citet{takeno2020multi} on these two scenarios, as well as its single-fidelity counterpart (SF-MES), and our proposed algorithm, rMFBO, built on top of these methods (rMF-MES).
In both cases, the cost of the primary IS is set to 1, and cost of the auxiliary IS to 0.2. The simple regret of the three algorithms is displayed in Figure~\ref{mockup_fig}.

When the auxiliary IS is informative (left panel), MF-MES converges faster than SF-MES. This is the expected behavior from MFBO algorithms: they use cheap IS queries in the beginning to clear out unpromising regions of the space at a low cost, which eventually speeds up convergence. However, when the auxiliary IS is irrelevant (right panel), there is a clear gap between MF-MES and SF-MES, even in the long run. This demonstrates the inability of MF-MES to deal with an irrelevant IS. In that scenario, we hypothesize that the budget is wasted on uninformative queries, and thus too many rounds are spent on learning that the sources are not correlated (Figure~\ref{mockup_fig}, right bar plot), leading to a sub-optimal data acquisition trajectory.

There is therefore a need for a robust method for such a scenario. This is what the proposed rMF-MES, formally introduced in the next section, achieves, by taking the best of both worlds: sticking close to the single fidelity track in case of an irrelevant IS, while using informative lower-fidelity queries to accelerate convergence.

\section{ROBUST MFBO ALGORITHM}

In this section, we introduce rMFBO (robust MFBO), a methodology to make any GP-based MFBO scheme robust to the addition of unreliable ISs. The key idea is to control the quality of the acquisitions to prevent the MFBO algorithm from behaving as described at the end of Section \ref{pitfalls}.

At round $t$, based on the acquisition function $\alpha$, MFBO proposes the query $(\x^{\text{MF}}_t, \ell_t)$ according to Eq.~\eqref{eq-mfbo}. The question is to decide whether to execute this query, or to go with a more conservative query from the primary IS. Indeed, we wish to curb the potential performance deterioration w.r.t.\ vanilla BO. To do so, we introduce a concurrent pseudo-SFBO algorithm, which constructs a GP surrogate based on data from the primary IS only, and so-called \emph{pseudo-observations}, introduced later on. The pseudo-SFBO uses the acquisition function $\x \mapsto \alpha(\x,m)$ and a separate single-output GP, yielding the query $(\x^{\text{pSF}}_t,m)$.

Let us denote the predictive mean and standard deviation of the MOGP model (used by the MFBO algorithm) by $\mu_{\text{MF}}$ and $\sigma_{\text{MF}}$, and those of the GP model (used by the pseudo-SFBO algorithm) by $\mu_{\text{SF}}$ and $\sigma_{\text{SF}}$. In a nutshell, the proposed rMFBO follows the conservative query $\x^{\text{pSF}}_t$, unless the predictive variance of the MOGP model at $\x^{\text{pSF}}_t$ is small enough:
\begin{equation}\label{cond_var}
\sigma_{\text{MF}}(\x^{\text{pSF}}_t,m) \leq c_1,
\end{equation}
where $c_1>0$ is a user-specified parameter. The pseudo-SFBO learns from all the samples, even when the MFBO candidate $\x^{\text{MF}}_t$ is queried, by adding the pseudo-obervation $\mu_{\text{MF}}(\x^{\text{pSF}}_t,m)$.

It can easily be seen that if $c_1 \rightarrow 0$, the described algorithm becomes the SFBO algorithm, since the MFBO proposals would be always ignored. Our main result, formally discussed in Section \ref{theoretical_results}, is that we are able to derive a lower bound on the regret difference between robust MFBO and SFBO as a function of $c_1 > 0$.

While condition \eqref{cond_var} ensures that we can achieve similar performance as SFBO when auxiliary IS is irrelevant, we also want to reap the benefits of the multi-fidelity approach when the auxiliary IS is relevant. To that end, we introduce a measure of IS relevance, $s$, and add this second condition for the acceptance of the MF query:
\begin{equation}\label{cond_relevance}
s(\x^{\text{MF}}_t,\ell_t) \geq c_2,
\end{equation}
with $c_2 > 0$ a user-specified parameter. We want to draw attention to the fact that condition~\eqref{cond_var} operates over the SFBO proposal while condition~\eqref{cond_relevance} acts on the MFBO proposal, possibly based on an auxiliary IS.
Condition \eqref{cond_relevance} makes rMFBO revert more often to primary IS, and makes pseudo-observations more accurate overall. This allows the algorithm to consider less conservative values for $c_1$, opening the door for the exploitation of auxiliary ISs. In this paper, we use a cost-adjusted information gain \citep{takeno2020multi},
\begin{equation}
    s(\x,\ell) = \frac{\text{I}(f^{(\ell)}(\x), f_* \mid \mathcal{D}^{\text{MF}})}{\lambda_{\ell}} ,
\end{equation}
where $\text{I}$ is the mutual information between the observation $f^{(\ell)}(\x)$ and the maximal value of $f^{(m)}, f_* := \max_{\x \in \mathcal{X}} f^{(m)}(\x)$; in other words the information gain on $f_*$ brought by the observation $f(\x,\ell)$.

The whole procedure is summarized in Algorithm \ref{mfbo-algo}, and an extended version is discussed in Supplementary \ref{sec:fullversion}. Note that lines 24-26 guarantee that if the maximizer is one of the unobserved pseudo-points, it is converted into an actual observation, a requirement in the proof of Theorem \ref{no_harm_theorem}.

\begin{algorithm}[t]
\caption{Robust MFBO algorithm}
\label{mfbo-algo}
    \begin{algorithmic}[1]
        \STATE \textbf{Input}: Budget $\Lambda$, costs $(\lambda_1,...,\lambda_m)$, acquisition function $\alpha$, hyperparameters $c_1$ and $c_2$, relevance measure $s$
        \STATE Initialize $\mathcal{D}^{\text{pSF}},\mathcal{D}^{\text{MF}}$
        \STATE Perform Bayesian updates $\mu_{\text{SF}},\sigma_{\text{SF}},\mu_{\text{MF}},\sigma_{\text{MF}}$
        \STATE $t \gets 1$
        \WHILE {$\lfloor \Lambda / \lambda_m \rfloor \geq 2 \lambda_m$}
            \STATE $\x^{\text{pSF}}_t \gets \argmax_{\x} \alpha(\x,m \mid  \mu_{\text{SF}},\sigma_{\text{SF}})$
            \STATE $(\x^{\text{MF}}_t,\ell_t) \gets \argmax_{\x,\ell} \alpha(\x,\ell  \mid  \mu_{\text{MF}},\sigma_{\text{MF}})$
            \IF{$\sigma_{\text{MF}}(\x^{\text{pSF}}_t,m) \leq c_1$ \AND $s(\x^{\text{MF}}_t,\ell_t) \geq c_2$}
                \STATE $y_t \gets f(\x^{\text{MF}}_t,\ell_t)$
                \STATE $\mathcal{D}^{\text{MF}} \gets \mathcal{D}^{\text{MF}} \cup \{((\x^{\text{MF}}_t,\ell_t),y_t)\}$
                \STATE Perform Bayesian updates $\mu_{\text{MF}},\sigma_{\text{MF}}$
                \STATE $y_t \gets \mu_{\text{MF}}(\x^{\text{pSF}}_t,m)$  \ \# pseudo-observation
                \STATE $\mathcal{D}^{\text{pSF}} \gets \mathcal{D}^{\text{pSF}} \cup \{(\x^{\text{pSF}}_t,y_t)\}$\
                \STATE $\Lambda \gets \Lambda - \lambda_{\ell_t}$
            \ELSE
                \STATE $y_t \gets f(\x^{\text{pSF}}_t,m)$
                \STATE $\mathcal{D}^{\text{pSF}} \gets \mathcal{D}^{\text{pSF}} \cup \{(\x^{\text{pSF}}_t,y_t)\}$
                \STATE $\mathcal{D}^{\text{MF}} \gets \mathcal{D}^{\text{MF}} \cup \{((\x^{\text{pSF}}_t,m),y_t)\}$
                \STATE $\Lambda \gets \Lambda - \lambda_m$
            \ENDIF
            \STATE Perform Bayesian updates $\mu_{\text{SF}},\sigma_{\text{SF}},\mu_{\text{MF}},\sigma_{\text{MF}}$
            \STATE $t \gets t + 1$
        \ENDWHILE
        \STATE $S \gets \left\{ \x \in \mathcal{X}\ \mid\ \sigma_{\text{MF}}(\x,m) \leq c_1\right\}$
        \STATE $\x^{\text{pSF}}_t \gets \argmax_{\x \in S} \mu_{\text{MF}}(\x,m)$
        \STATE $y_t \gets f(\x^{\text{pSF}}_t,m)$
    \end{algorithmic}
\end{algorithm}

\section{THEORETICAL RESULTS}\label{theoretical_results}

In this section we tie the regret of rMFBO to that of its SFBO counterpart. The derivation holds for any relevance measure $s$.

Let us first define the function $f : \mathcal{X} \times \llbracket m \rrbracket \rightarrow \mathbb{R}$ such that $f(\x,\ell) = f^{(\ell)}(\x)$ for all $(\x,\ell) \in \mathcal{X} \times \llbracket m \rrbracket$. We assume that $\mathcal{X}$ is a convex compact subset of $\mathbb{R}^d$, and we make the following assumptions about $f$:

\begin{assumption}[$f$ is drawn from a MOGP]\label{f_is_GP}
Assume $f$ is a draw from a MOGP with zero-mean and covariance function $\kappa((\x,\ell),(\x',\ell'))$. In other words, $\{f(\x_i,\ell_i)\}_i$ is multivariate normal for any finite set of input-IS pairs $\{(\x_i,\ell_i)\}_i$.
\end{assumption}

\begin{assumption}\label{k_is_known}
$\kappa$ is known.
\end{assumption}

\begin{assumption}\label{k_is_in_C2}
$\kappa$ is at least twice differentiable. 
\end{assumption}

These assumptions are common in the Bayesian optimization literature. For instance, see \citet{srinivas_ucb} and \citet{kandasamy2016}. We also follow these authors in the next assumption.

\begin{assumption}[Bounded derivatives with high probability]\label{smallderivative}
\begin{equation*}
\mathbb{P} \left(\underset{\x\in \mathcal{X}}{\sup} \left\lvert \frac{\partial f}{\partial x_j} \right\rvert > L\right) \leq ae^{-(L/b_j)^2}, \quad \forall~j \in \llbracket d \rrbracket
\end{equation*}
for some constants $a,b_j >0$.
\end{assumption}
Since a function with bounded partial derivatives (with an uniform bound $L$) is Lipschitz continuous (with a Lipschitz constant $\sqrt{d}L$), Assumption \ref{smallderivative} implies by complementing and the union bound that
\begin{equation*}
|f^{(m)}(\x) - f^{(m)}(\x')| \leq \sqrt{d} L \norm{\x - \x'}_2 \quad \forall~\x,\x' \in \mathcal{X},
\end{equation*}
with probability greater than $1-dae^{-(L/b)^2}$, where $b := \max_j b_j$. Further, Assumption \ref{smallderivative} is satisfied for four times differentiable kernels \citep[][Theorem 5]{ghosal2006posterior}.

The next assumptions relate to the acquisition function.
\begin{assumption}\label{alpha_diff}
For any round $t \in \mathbb{N}$, we assume that the mapping $(\x,\mathcal{D}_t) \mapsto \alpha(\x,m,\mathcal{D}_t)$ is  $C^2$.
\end{assumption}
\begin{assumption}\label{alpha_nonsingular_Hessian}
The Hessian $\nabla_{\x}^2 \alpha(\x,m)$ is a definite matrix at the optimum $\x = \x^*$.
\end{assumption}

Running Algorithm \ref{mfbo-algo} for $T$ rounds returns the trajectory $\{\x^{\text{pSF}}_{t}\}_{t=1}^T$, which consists of primary IS queries and pseudo-primary IS queries. Moreover, we denote the acquisition trajectory returned by the single-fidelity counterpart as $\{\x^{\text{SF}}_{t}\}_{t=1}^T$. Our reasoning is as follows: we first control the closeness of the two acquisition trajectories, then derive a lower bound on the difference of their regret. 

Let us consider the dataset as a $t(d+1)$-dimensional vector $\mathcal{D}_t = (x_1^{(1)},...,x_t^{(d)},y_1,...,y_t)$. Let $\mathbb{D}_t$ be the closed line segment joining two datasets $\mathcal{D}_t^{A}$ and $\mathcal{D}_t^{B}$. We introduce a concept, \textit{the maximum rate of change of the next query with respect to $\mathbb{D}_t$}, defined as the random variable $M_t$,
\begin{align*}
M_t = \max_{\mathcal{D} \in \mathbb{D}_t} \norm{\frac{\partial \x_{t+1}}{\partial \mathcal{D}}\left(\mathcal{D}\right)}_{\text{op}},
\end{align*}
where $\norm{\cdot}_{\text{op}}$ is the operator norm. $M_t$ measures the sensitivity of the next query when moving from a dataset $\mathcal{D}_t^{A}$ to a dataset $\mathcal{D}_t^{B}$. It depends on the smoothness of the objective function $f$, the kernel $k$, and the acquisition function $\alpha$. The detailed formulas Eqs.~\eqref{dxdD}-\eqref{M_t} and the computation details for $M_t$ can be found in Supplementary \ref{supp_proofs} and \ref{supp_computeM}. Consider $\mathcal{D}_t^{A} = \mathcal{D}_t^{\text{pSF}}$ and $\mathcal{D}_t^{B} = \mathcal{D}_t^{\text{SF}}$, and let us denote by $\hat{M}_t$ the largest product of any combination of $M_0,...,M_{t-1}$,
\begin{align}
\hat{M}_t = \max_{S \in 2^{\llbracket t-1 \rrbracket}}\prod_{k \in S}M_k.
\end{align}

\begin{proposition}\label{bound_sequence}
Assume Algorithm \eqref{mfbo-algo} has been run with control parameter 
\begin{align}\label{c_1}
    c_1(\varepsilon,q) = \frac{\varepsilon}{\sqrt{-2\log(1-q)}},
\end{align}
for $\varepsilon > 0$ and $q \in (0,1)$. Then, for all $t \in \llbracket T-1 \rrbracket$,
\begin{align}
\norm{\x^{\text{SF}}_{t} - \x^{\text{pSF}}_{t}}_{\infty} &\leq \varepsilon t\hat{M}_t d^{t}
\end{align}
holds with probability greater than $q\left(1-da\exp(-1/b^2)\right)$.
\end{proposition}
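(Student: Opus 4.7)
The two trajectories differ only because at rounds where rMFBO accepts the MF query, the pseudo-SFBO dataset is fed a pseudo-observation $\mu_{\text{MF}}(\x^{\text{pSF}}_t, m)$ in place of the real evaluation $f(\x^{\text{pSF}}_t, m)$. The plan is therefore to (i) bound the per-round pseudo-observation error by a Gaussian-tail argument made possible by the acceptance condition $\sigma_{\text{MF}}(\x^{\text{pSF}}_t, m) \leq c_1$, and (ii) propagate this error through the acquisition map $\mathcal{D}_t \mapsto \x_{t+1}(\mathcal{D}_t)$ via the sensitivity quantity $M_t$ that has been introduced for exactly this purpose.

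\textbf{Step 1 (per-round error).} Under Assumptions~\ref{f_is_GP}--\ref{k_is_known}, conditional on the observed data, the residual $f(\x^{\text{pSF}}_t, m) - \mu_{\text{MF}}(\x^{\text{pSF}}_t, m)$ is a centred Gaussian with variance $\sigma_{\text{MF}}^2(\x^{\text{pSF}}_t, m) \leq c_1^2$. The choice $c_1 = \varepsilon / \sqrt{-2 \log(1-q)}$ is tuned precisely so that the standard Gaussian tail inequality $P(|Z| \leq z) \geq 1 - e^{-z^2/2}$ yields $|f(\x^{\text{pSF}}_t,m) - \mu_{\text{MF}}(\x^{\text{pSF}}_t, m)| \leq \varepsilon$ with probability at least $q$; on rejected rounds the two procedures share the same query and the discrepancy is zero. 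Separately, Assumption~\ref{smallderivative} applied with $L = 1$ together with a union bound over the $d$ coordinates (taking $b = \max_j b_j$) renders $f^{(m)}$ $\sqrt{d}$-Lipschitz with probability at least $1 - d a \exp(-1/b^2)$.

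\textbf{Step 2 (sensitivity propagation and recursion).} Viewing $\mathcal{D}_t$ as the $t(d+1)$-vector of accumulated coordinates, Assumptions~\ref{alpha_diff}--\ref{alpha_nonsingular_Hessian} let the implicit function theorem give $C^1$ dependence of $\x_{t+1}$ on $\mathcal{D}_t$ in a neighbourhood of each maximiser. Writing the integral form of the mean value theorem along the segment $\mathbb{D}_t$ and using the operator-norm definition of $M_t$ gives
\[
\norm{\x^{\text{SF}}_{t+1} - \x^{\text{pSF}}_{t+1}}_\infty \leq M_t \cdot \norm{\mathcal{D}_t^{\text{SF}} - \mathcal{D}_t^{\text{pSF}}}.
\]
The $\x$-entries of the dataset difference contribute $\delta_k := \norm{\x^{\text{SF}}_k - \x^{\text{pSF}}_k}_\infty$ in each of their $d$ scalar coordinates per round, while the $y$-entries contribute at most $\varepsilon$ (Step~1) augmented by the Lipschitz-induced $\sqrt{d}\,\delta_k$ (Step~1 again). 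Unrolling the recursion from $\delta_1 = 0$ writes $\delta_t$ as a sum of products of the shape $\varepsilon\, (\prod_{k \in S} M_k)\, d^{|S|}$, where $S \subseteq \llbracket t-1 \rrbracket$ records the rounds at which a perturbation was genuinely propagated through the Jacobian. Dominating each such product by $\hat{M}_t$, each $d^{|S|}$ by $d^t$, and the outer sum by the trivial count $t$ yields $\delta_t \leq \varepsilon t \hat{M}_t d^t$. A union bound on the Gaussian-tail event (probability $\geq q$) and the Lipschitz event (probability $\geq 1 - d a \exp(-1/b^2)$) finally delivers the announced lower bound $q\bigl(1 - d a \exp(-1/b^2)\bigr)$.

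\textbf{Main obstacle.} The delicate piece is the combinatorial bookkeeping of Step~2: $\hat{M}_t$ emerges as a maximum over subsets precisely because Jacobian factors compound only at rounds where a perturbation is injected, and obtaining exactly $d^t$ (rather than something heavier such as $(d + \sqrt{d})^t$) requires a careful accounting of how each coordinate of the $\x$-entries feeds into the operator norm, combined with how the Lipschitz coupling of $y$-entries to earlier $\x$-discrepancies recycles through the induction. Verifying this tightness at every step, and matching it cleanly to the definition of $\hat{M}_t$, is the most technically demanding part of the argument.
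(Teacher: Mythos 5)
Your proposal follows essentially the same route as the paper: an induction/recursion on the trajectory discrepancy, the mean value inequality along the segment $\mathbb{D}_t$ with the sensitivity constant $M_t$, the Gaussian-tail bound unlocked by the acceptance condition $\sigma_{\text{MF}}(\x^{\text{pSF}}_t,m)\leq c_1$ with $c_1=\varepsilon/\sqrt{-2\log(1-q)}$, and the Lipschitz bound from Assumption~\ref{smallderivative} for the real-observation entries, with the same bookkeeping that turns products of the $M_k$ into $\hat M_t$ and powers of $d$ into $d^t$. One small discrepancy: a union bound on the two events only gives $q+\left(1-da\exp(-1/b^2)\right)-1$, which is weaker than the stated product; the paper instead asserts independence of the Gaussian-residual event and the derivative-bound event to obtain $q\left(1-da\exp(-1/b^2)\right)$.
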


\begin{proof}
The proof for the noiseless scenario ($\sigma_{\text{noise}}=0$) is given in Supplementary \ref{noiseless_scenario}. For a noisy scenario ($\sigma_{\text{noise}}>0$), see the proof in Supplementary \ref{noisy_scenario}. In the noisy scenario, the statement holds with probability greater than $\textrm{erf}\left(\frac{1}{2\sqrt{\sigma_{\text{noise}}}}\right)\textrm{erf}\left(\frac{1}{\sqrt{2\sigma_{\text{noise}}}}\right)q\left(1-da\exp(-1/b^2)\right)$, where $\textrm{erf}$ is the Gauss error function.
\end{proof}

\begin{corollary}\label{instant_regret_bounded}
The instant regret difference for all $t \in \llbracket T-1 \rrbracket$ is bounded; we have
\begin{align}
    \left|f^{(m)}(\x_t^{\text{SF}})-f^{(m)}(\x^{\text{pSF}}_{t})\right| \leq  \varepsilon t\hat{M}_t d^{t+1}
\end{align}
with probability greater than $q\left(1-da\exp(-1/b^2)\right)$.
\end{corollary}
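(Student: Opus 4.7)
The plan is to derive this corollary as a direct consequence of Proposition~\ref{bound_sequence} combined with the Lipschitz regularity of $f^{(m)}$ implied by Assumption~\ref{smallderivative}. In essence, Proposition~\ref{bound_sequence} controls the distance in input space between the SFBO iterate $\x_t^{\text{SF}}$ and the pseudo-SFBO iterate $\x_t^{\text{pSF}}$; I then transfer this into a bound on the gap in objective value by paying one factor corresponding to the Lipschitz constant, and convert the $\infty$-norm on $\mathcal{X} \subset \mathbb{R}^d$ into the $2$-norm that naturally appears in the Lipschitz statement.

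Concretely, first I would invoke Proposition~\ref{bound_sequence} to obtain
\begin{equation*}
\norm{\x_t^{\text{SF}} - \x_t^{\text{pSF}}}_\infty \leq \varepsilon t \hat{M}_t d^{t}
\end{equation*}
on an event $A$ of probability at least $q(1-da\exp(-1/b^2))$. Next, I would use the consequence of Assumption~\ref{smallderivative} noted immediately after its statement: on an event $B$ of probability at least $1-da\exp(-(L/b)^2)$ (taking $b := \max_j b_j$), the primary IS is Lipschitz,
\begin{equation*}
|f^{(m)}(\x) - f^{(m)}(\x')| \leq \sqrt{d}\,L \norm{\x - \x'}_2 \quad \forall \x,\x' \in \mathcal{X}.
\end{equation*}
Choosing $L=1$ matches the probability factor in Proposition~\ref{bound_sequence}. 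Using the standard norm equivalence $\norm{\cdot}_2 \leq \sqrt{d} \norm{\cdot}_\infty$ on $\mathbb{R}^d$, I chain the two estimates to get, on $A \cap B$,
\begin{equation*}
|f^{(m)}(\x_t^{\text{SF}}) - f^{(m)}(\x_t^{\text{pSF}})| \leq \sqrt{d}\cdot \sqrt{d}\, \norm{\x_t^{\text{SF}} - \x_t^{\text{pSF}}}_\infty \leq d \cdot \varepsilon t \hat{M}_t d^{t} = \varepsilon t \hat{M}_t d^{t+1},
\end{equation*}
which is the claimed instant-regret bound.

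The only genuinely delicate step is the probability bookkeeping: the conclusion must hold with probability $q(1-da\exp(-1/b^2))$, not with a smaller probability obtained by multiplying Proposition~\ref{bound_sequence}'s bound with an independent Lipschitz event. The clean way to handle this is to observe that the Lipschitz event $B$ is already the one baked into Proposition~\ref{bound_sequence}'s probability statement (its proof in Appendix~\ref{noiseless_scenario}/\ref{noisy_scenario} relies on the same bounded-derivative event to control $M_t$ and propagate perturbations), so $A \subseteq B$ and no further union bound is required. If instead the proposition's proof does not already absorb $B$, I would explicitly intersect $A$ with $B$ and rely on the fact that both are defined via the same bounded-derivative event, yielding the identical lower bound $q(1-da\exp(-1/b^2))$. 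I expect this probability alignment to be the only nontrivial point; everything else is a one-line Lipschitz-plus-norm-equivalence argument.
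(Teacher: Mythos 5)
Your proposal is correct and follows essentially the same route as the paper: apply the Lipschitz consequence of Assumption~\ref{smallderivative} with $L=1$, convert norms via $\norm{\cdot}_2 \leq \sqrt{d}\norm{\cdot}_\infty$ to get the factor $d$, and chain with Proposition~\ref{bound_sequence}. Your observation that the bounded-derivative event is already absorbed into the probability of Proposition~\ref{bound_sequence} (so no extra union bound is needed) is the right reading of the paper's probability bookkeeping, which the paper itself leaves implicit.
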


\begin{proof}
By Assumption \ref{smallderivative} and the equivalence of the norms $\norm{\cdot}_2$ and $\norm{\cdot}_{\infty}$ (with a constant $\sqrt{d}$), it holds with probability greater than $1-da\exp(-\frac{1}{b^2})$ that
\begin{equation*}
\left|f^{(m)}(\x)-f^{(m)}(\x')\right| \leq  d \norm{\x - \x'}_{\infty},
\end{equation*}
for all $\x,\x' \in \mathcal{X}$. Corollary \ref{instant_regret_bounded} follows from Proposition \ref{bound_sequence}.
\end{proof}

We can now present our main result, which states that with a conservative control parameter $c_1(\varepsilon, q)$ (small $\varepsilon$ and high $q$), the worst case regret remains close to the regret of the SFBO algorithm, with high probability. This means that including an auxiliary IS in the robust MFBO algorithm will not cause any ``harm'', given conservative control parameters.

\begin{theorem}[“No harm"]\label{no_harm_theorem}
Assume that both algorithms, the robust MFBO (Algorithm \ref{mfbo-algo}) and its SFBO variant, return their simple final choices. Then,
\begin{align}
R(\Lambda + \lambda_m, \x_{\text{choice}}^{\text{rMF}})~\leq~ &R(\Lambda, \x_{\text{choice}}^{\text{SF}}) \\ & + \varepsilon \max \big\{ T\hat{M}_T d^{T+1},2\big\} \notag,
\end{align}
with probability greater than $q\left(1-da\exp(-\frac{1}{b^2})\right)$.
\end{theorem}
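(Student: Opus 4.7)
The plan is to bound rMFBO's simple regret by SFBO's via a reference round $t^\star \in \argmax_{t \in \llbracket T \rrbracket} f^{(m)}(\x^{\text{SF}}_t)$ at which SFBO realises its simple regret, and then to absorb the residual gap through a pointwise posterior-concentration argument at the final-evaluation step of Algorithm~\ref{mfbo-algo}. Applying Corollary~\ref{instant_regret_bounded} at $t^\star$ immediately gives
\begin{equation*}
f^{(m)}(\x^{\text{SF}}_{t^\star}) - f^{(m)}(\x^{\text{pSF}}_{t^\star}) \leq \varepsilon\, t^\star \hat{M}_{t^\star} d^{t^\star+1} \leq \varepsilon T \hat{M}_T d^{T+1},
\end{equation*}
on an event of probability at least $q(1 - da\exp(-1/b^2))$. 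So the pseudo-SFBO trajectory already carries a witness whose $f^{(m)}$-value is within $\varepsilon T\hat{M}_T d^{T+1}$ of SFBO's simple choice.

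I would then transfer from this witness to rMFBO's actual simple choice $\x^{\text{rMF}}_{\text{choice}}$. Because the latter is the argmax over the primary IS trajectory, it is in particular at least as good as the final-evaluation point $\argmax_{\x \in S} \mu_{\text{MF}}(\x, m)$ with $S = \{\x \in \mathcal{X} : \sigma_{\text{MF}}(\x, m) \leq c_1\}$. The key sub-claim is that $\x^{\text{pSF}}_{t^\star} \in S$ at termination: if the accept branch fired at round $t^\star$, the condition $\sigma_{\text{MF}}(\x^{\text{pSF}}_{t^\star}, m) \leq c_1$ already held then, and posterior variances are monotone non-increasing in the conditioning set; if the reject branch fired, the primary IS was itself queried at $\x^{\text{pSF}}_{t^\star}$ and added to $\mathcal{D}^{\text{MF}}$, sending its MOGP posterior variance down to essentially $\sigma_{\text{noise}}^2$. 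In either case, the standard Gaussian tail bound
\begin{equation*}
\prob\!\left(|f^{(m)}(\x) - \mu_{\text{MF}}(\x,m)| > \sigma_{\text{MF}}(\x, m)\sqrt{-2\log(1-q)}\right) \leq 1-q,
\end{equation*}
calibrated via $c_1\sqrt{-2\log(1-q)} = \varepsilon$, yields $|f^{(m)}(\x) - \mu_{\text{MF}}(\x, m)| \leq \varepsilon$ with probability $\geq q$ at both $\x = \x^{\text{pSF}}_{t^\star}$ and $\x = \x^{\text{rMF}}_{\text{choice}}$. Chaining these with the optimality relation $\mu_{\text{MF}}(\x^{\text{rMF}}_{\text{choice}}, m) \geq \mu_{\text{MF}}(\x^{\text{pSF}}_{t^\star}, m)$ produces $f^{(m)}(\x^{\text{rMF}}_{\text{choice}}) \geq f^{(m)}(\x^{\text{pSF}}_{t^\star}) - 2\varepsilon$.

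Combining the two chains yields $R(\Lambda + \lambda_m, \x^{\text{rMF}}_{\text{choice}}) \leq R(\Lambda, \x^{\text{SF}}_{\text{choice}}) + \varepsilon(T\hat{M}_T d^{T+1} + 2)$, which one massages into the statement's $\varepsilon\max\{T\hat{M}_T d^{T+1}, 2\}$ form by a short casework separating the regime in which trajectory drift dominates from the regime in which posterior concentration does. The probability threshold is preserved by union-bounding Proposition~\ref{bound_sequence}'s event with the two pointwise Gaussian concentration events at $\x^{\text{pSF}}_{t^\star}$ and $\x^{\text{rMF}}_{\text{choice}}$, each already calibrated to probability $\geq q$ via the choice of $c_1(\varepsilon, q)$, while the factor $1 - da\exp(-1/b^2)$ is inherited from the Lipschitz event of Assumption~\ref{smallderivative}. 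I expect the main obstacle to be the persistence claim $\x^{\text{pSF}}_{t^\star} \in S$ at termination---which requires monotone decrease of the MOGP posterior variance (straightforward for separable kernels, more delicate in general) and careful bookkeeping to get exactly the $\max\{\cdot, 2\}$ shape of the bound rather than a sum.
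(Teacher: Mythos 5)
Your proposal follows essentially the same route as the paper's proof: bound the gap between the SFBO and pseudo-SFBO trajectories at the round realising SFBO's simple regret via Corollary~\ref{instant_regret_bounded}, then use the budget reserved for the final query over $S = \{\x \in \mathcal{X} : \sigma_{\text{MF}}(\x,m)\le c_1\}$ together with the Gaussian concentration calibrated by $c_1(\varepsilon,q)$ to pay an extra $2\varepsilon$ when the relevant witness is only a pseudo-observation. You are in fact more explicit than the paper about the persistence claim $\x^{\text{pSF}}_{t^\star}\in S$ at termination and about the union bound over events, and the one point you flag as delicate --- obtaining $\varepsilon\max\{T\hat{M}_T d^{T+1},2\}$ rather than the sum $\varepsilon(T\hat{M}_T d^{T+1}+2)$ --- is a genuine issue that the paper's own proof also leaves implicit, since it states the two bounds separately without showing how they combine into a maximum.
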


\begin{proof}
The proof is given in Supplementary \ref{proof_no_harm_theorem}.
\end{proof}

Theorem \ref{no_harm_theorem} says that the magnitude of a possible regret loss, compared to SFBO, is proportional to $\varepsilon$ with a probability proportional to $q$. For instance, if we tolerate $0.1$ units of regret undershoot with $90\%$ probability, then by Theorem \ref{no_harm_theorem} this is guaranteed with the control parameter value $c_1 = c_1(0.1,0.9) \approx 0.05$ for early BO rounds.

The regret difference bound in Theorem \ref{no_harm_theorem} corresponds to the worst-case scenario where the multi-fidelity queries $(\x^{\text{MF}}_t, \ell_t)$ are always accepted, and hence the bound is not tight. The bound is practically useful only in the first rounds due to the exponential dependence on $T$, however, in practice the bound is tighter than stated in Theorem \ref{no_harm_theorem}, because the acceptance probability is never one (due to $c_2 > 0$).

\section{EXPERIMENTAL RESULTS}\label{experiments}

We evaluate rMFBO on a benchmark of synthetic functions widely used in multi-fidelity studies. Our proposed method is used to make three state-of-the-art MFBO algorithms robust: Maximum Entropy Search (MF-MES)~\citep{takeno2020multi}, General-purpose Information-Based Bayesian OptimisatioN (MF-GIBBON)~\citep{moss21gibbon} and Knowledge Gradient (MF-KG)~\citep{poloczek2017multi}, the latter being benchmarked only on low-dimensional problems. All experiments are run within the BoTorch framework~\citep{balandat2020botorch}. The probabilistic surrogate model uses the downsampling kernel from \citet{wu2019practical}: $k((\x,\ell),(\x',\ell')) = k_{\text{input}}(\x, \x') \times k_{\text{IS}}(\ell,\ell')$, where $k_{\text{input}}(\cdot,\cdot)$ is the RBF kernel, and $k_{\text{IS}}(\ell,\ell') := c + (1-\ell)^{1+\delta}(1-\ell')^{1+\delta}$. Here, $l \in [0,1]$ represents the degree of fidelity of the IS, $l=1$ corresponding to the target function, often denoted $m$.
The hyperparameters $c$ and $\delta$ are estimated by maximum marginal likelihood, similarly as those of $k_{\text{input}}$. Results using alternative probabilistic surrogates can be found in Supplementary~\ref{supp_mf-kernels}.  Each test function is rescaled in $[0, 1]$. Analytical forms can be found in Supplementary~\ref{sec:testfunctions}, together with 2D plots when applicable. For all experiments, the initial dataset consists of $5d$ evaluations of the primary IS and $4d$ evaluations of each auxiliary IS. For the remainder of the section, except in the dedicated ablation study, we set rMFBO hyperparameters to $c_1=c_2=0.1$.
The latter corresponds to about $15\%$ of the maximum information gain. For more information on choosing $c_2$, see Supplementary \ref{supp_c2}. For each experiment, we report the average and standard deviation of the simple regret computed over 100 repetitions with different initializations. Results displaying the inference regret instead can be found in Figure~\ref{fig:infregret} in the Supplementary. Source code reproducing the experiments is available at \url{https://github.com/AaltoPML/rMFBO}.

\subsection{Synthetic functions with one auxiliary IS}

The goal is to maximize a target function using noisy evaluations of the objective (primary IS), and an auxiliary IS. This is exactly the setting of the introductory example of Section \ref{pitfalls}, where we discussed results obtained with an informative IS case, and an irrelevant IS case. In this subsection, we provide results with two additional objective functions and types of auxiliary ISs.

\paragraph{Negated Exponential Currin 2D:}

We reproduce the experiment performed by~\citet{kandasamy2016} and consider the Exponential Currin function as the primary IS while the auxiliary IS is the negated objective function itself, with $\lambda_l=0.1, \lambda_m=1$. The proposed rMFBO is able to find the global optimum using a lower budget than its MFBO counterpart (Figure~\ref{fig:bigmatrix}, first row), even though performances are quite similar for the Knowledge Gradient acquisition function.

\paragraph{Sinus-perturbed Rosenbrock 2D:}

Next, we examine the Rosenbrock 2D function as the target. In this experiment, the auxiliary IS is equal to the objective corrupted with a sinusoidal signal whose magnitude is the target function mean. The costs are $\lambda_l = 0.2, \lambda_m=1$. rMFBO consistently improves over MFBO and leads to a simple regret on par with SFBO (Figure~\ref{fig:bigmatrix}, second row). It is worth noticing that a similar experiment was performed by~\citet{poloczek2017multi}, however using a sinusoidal signal with a magnitude 200 times inferior to the mean of the objective function. Applying a more realistic perturbation still produces an informative IS, while illustrating a slightly increased robustness brought by rMF-KG with respect to MF-KG.

\begin{figure*}[t]
\begin{center}
		\includegraphics[width=\textwidth]{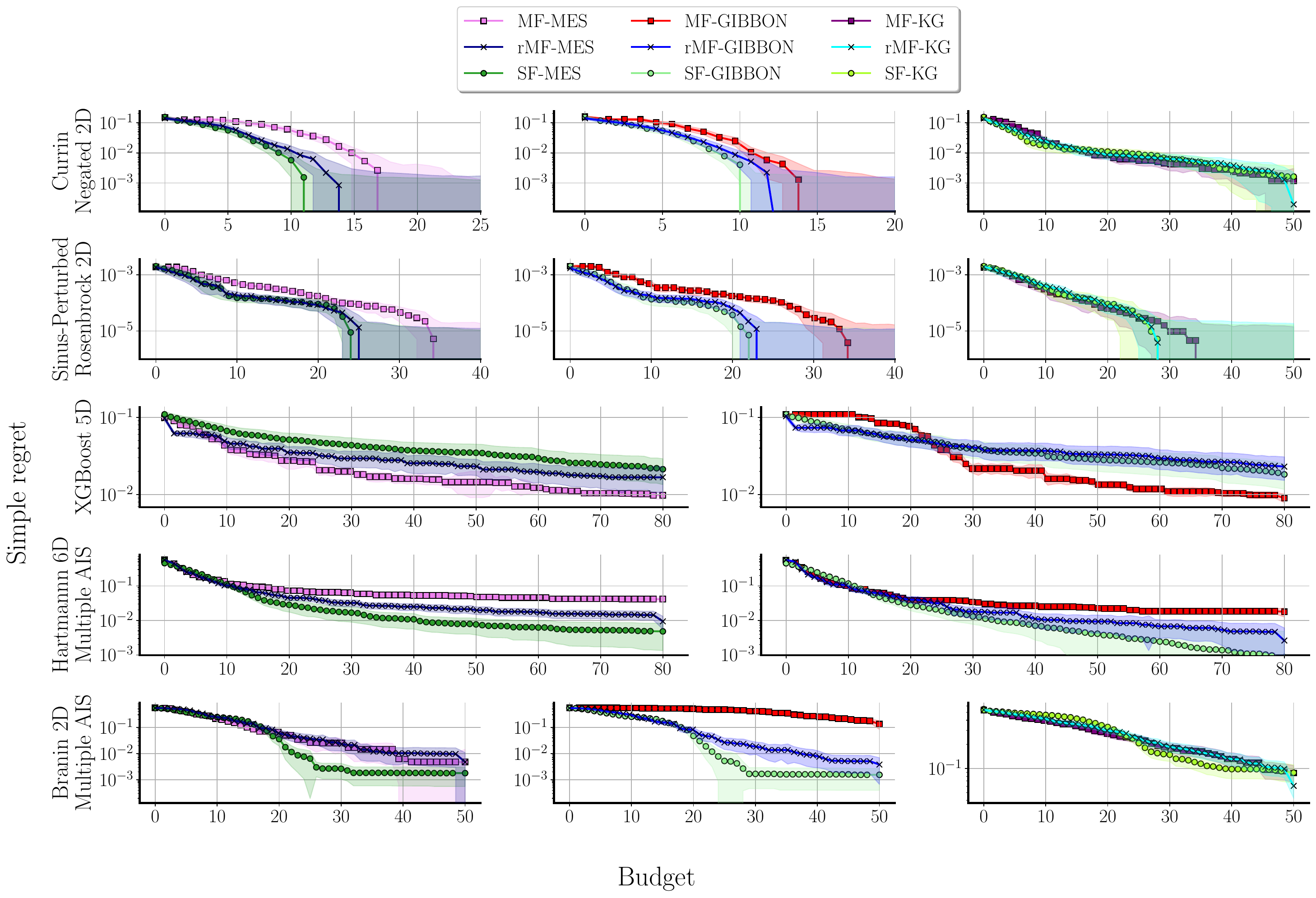}
	\end{center}
	\caption{Simple regret depicted over budget spent in five multi-fidelity problems, averaged over 100 repetitions. The first three problems have one auxiliary IS and the last two have three. Three BO acquisition functions (MES, GIBBON, KG) are tested with their multi- and single-fidelity variants. The proposed \textit{robust} multi-fidelity method is denoted by the letter `r', e.g.\ `rMF-MES'.}
\label{fig:bigmatrix}
\end{figure*}

\subsection{XGBoost hyperparameter tuning}

We now assess the performance of rMFBO on a real-world hyperparameter tuning example.
To that end, we follow the experiment introduced by~\citet{li2020multi} and train an XGBoost model~\citep{chen2016xgb} to predict a quantitative measure of the diabetes progression\footnote{\scriptsize{\url{https://archive.ics.uci.edu/ml/datasets/diabetes}}}.
The dataset includes 442 examples, two-thirds are used for training and the remaining fraction for evaluation. 
We employ the implementation from the scikit-learn library~\citep{scikit-learn} and optimize 5 continuous hyperparameters described in Section~\ref{sec:testfunctions}.
The primary IS trains XGBoost with 100 weak learners trees, while the auxiliary IS uses only 10, with $\lambda_l = 0.1, \lambda_m = 1$.
The optimization starts with 10 random queries at each IS.
We use the normalized root mean square error to evaluate the performance.
rMF-MES is able to take advantage of the auxiliary IS and achieves faster convergence than SF-MES, while staying close to MF-MES (Figure~\ref{fig:bigmatrix}, third row). Note that the auxiliary IS is in this case of extremely good quality and comes at a 10-times cheaper cost (see Figure~\ref{fig:xgbdist} in the Supplementary), thus demonstrating the ability of rMF-MES to use these cheap queries even though the main purpose of the algorithm is to provide robustness against unreliable sources.
On the other hand, rMF-GIBBON displays a behavior closer to SF-GIBBON. While a significant advantage over MF-GIBBON can be observed in the first half of the budget, rMF-GIBBON then gets distanced by its MFBO counterpart. 

\subsection{Synthetic functions with several auxiliary ISs of varying relevance}

Next, we check whether rMFBO is able to distinguish between relevant and irrelevant auxiliary ISs in the presence of multiple auxiliary ISs on two examples.
In the first problem, the 6D Hartmann function is selected as the primary IS, with 3 auxiliary ISs: Hartmann with degrees of fidelity 0.8, 0.1, and the Rosenbrock function, respectively. In the second problem, we wish to optimize the 2D Branin function with 3 auxiliary ISs: Branin with degree of fidelity 0.8, 0.1, and the Ackley function. In both settings, the primary IS can be queried for a cost of 1, and all auxiliary ISs for a cost of 0.2.
Regarding the Hartmann problem, rMFBO provides a consistent decrease of regret compared to MFBO across all methods. Our claim is backed up by the distribution of queries (Figure~\ref{fig:barplot}, top row), which show how rMFBO avoids querying uninformative sources while still taking advantage of the informative one, whereas MFBO queries are scattered across all ISs, delegating around 40\% of them to ISs that are either irrelevant or not worth the incurred cost.
Performance is slightly worse for the Branin problem, where rMFBO significantly improves over MFBO for the GIBBON acquisition function only. With a regret 100 times superior to that of MES and GIBBON methods, the KG method seems to converge very slowly for this example, even though only the primary IS is queried (Figure~\ref{fig:barplot}, lower panel, right). The close results between MF-MES and rMF-MES can be explained by the fact that MF-MES only spends 26\% of its queries on irrelevant ISs while dedicating 50\% of its budget on a cheap and relevant AIS.

\subsection{Ablation study}

Lastly, we investigate the performance of rMFBO w.r.t. several changes such as auxiliary IS cost or hyperparameters values. The benchmarks are only performed using the MES and GIBBON strategies, due to the high computational overhead of the KG method, more than ten times larger than MES and GIBBON \citep{moss21gibbon}. We study the same setting described at the end of Section~\ref{pitfalls}. All discussed figures are in the Supplementary material.

\paragraph{Variation of the auxiliary IS cost $\lambda_l$:}

Figure~\ref{fig:hartmanncost}
shows the evolution of the simple regret as the lower fidelity cost increases ($\lambda_l \in \{0.1, 0.2, 0.5, 0.8\}$). For the informative auxiliary IS case (first two columns), the increase in the cost naturally shifts the behavior of MFBO and rMFBO towards that of SFBO, losing the the acceleration of the convergence in the process. This illustrates the trade-off between informativeness of an auxiliary IS and its query cost.
Interestingly, MFBO methods fail to respect that trade-off in an irrelevant auxiliary IS setting. Indeed, MFBO regret eventually flats out for both MES and GIBBON acquisition strategies at high auxiliary IS query cost. For $\lambda_l = 0.8$, MF-MES spent 82\% of the budget on auxiliary IS queries on average, MF-GIBBON 42\%, over the 20 repetitions. Instead, rMFBO is, as expected, only slightly positively affected by the cost increase, since the irrelevant auxiliary IS becomes less and less relevant, showing again robustness to uninformative IS.

\paragraph{Variation of rMFBO hyperparameters:}
We now investigate how the hyperparameters $c_1$ and $c_2$ affect the performance of rMFBO. $c_1$ constitutes a threshold on the primary IS MF model posterior variance evaluated at SFBO proposal $(\x^{\text{pSF}}_t, m)$, which leads to a primary IS query when exceeded. $c_2$ measures the information gain provided by the MFBO proposal $(\x^{\text{MF}}_t, l)$ for $l \in \llbracket m \rrbracket$, and leads to a primary IS query when not exceeded. We vary $c_1,c_2 \in \{0,0.1,0.2\}$ and display the results in 
Figure~\ref{fig:hartmannablation}.

By construction, setting $c_1$ to $0$ (first two rows) essentially reduces rMFBO to SFBO, no matter the value of $c_2$. As $c_1$ increases and $c_2=0$ (first column, third to sixth rows), rMFBO quickly transitions to the MFBO dynamics regardless of the relevance of the IS. Then, increasing $c_2$ provides a reasonable trade-off between robustness to irrelevant auxiliary IS and exploitation of informative auxiliary IS.

A tentative adaptive strategy to automatically set $c_2$ is presented in Supplementary Section~\ref{sec:adap} and illustrated in Figure~\ref{fig:bigmatrixadap}. Results are comparable with that of Figure~\ref{fig:bigmatrix}, where $c_2 = 0.1$.

\paragraph{Varying the kernel confidence level:}
The downsampling kernel used as joint model here
involves a contribution that depends on the value $\ell$ associated to each IS (Equation~(\ref{eq:downsamp})). While the primary IS $m$ always corresponds to the value $\ell=1$, the value set for the auxiliary IS remains to be selected, and determines whether that IS is perceived as relevant in the optimization of the objective. In Figure~\ref{fig:hartmannconfidence}, for each row, the auxiliary IS value $\ell$ varies in $\{0.1, 0.2, 0.5, 0.8\}$, so as to simulate increased confidence in the auxiliary IS for the MOGP. As expected, gradually increasing the level of confidence in the irrelevant IS case leads to optimization failure for MF-MES and MF-GIBBON (right panel), while rMF-MES and rMF-GIBBON maintain steady performances. In the relevant auxiliary IS case (left panel), there seems to exist an optimal value of $\ell \approx 0.1$ matching the informativeness of the AIS, so that MFBO methods perform on par with rMFBO. Beyond that value, overconfidence leads to degraded performance for MFBO.

\paragraph{Varying the MOGP model:}
Previous experiments were performed using the downsampling kernel as the joint model. We next employ the linear truncated kernel available from the BoTorch library, as well as the MISO kernel introduced in~\citep{poloczek2017multi}. These kernels are described in Section~\ref{supp_mf-kernels} and yield similar performance as the downsampling kernel (Figures~\ref{fig:bigmatrixlt} and~\ref{fig:bigmatrixmiso}), thus demonstrating that rMFBO is robust to the choice of a specific Gaussian process surrogate model.

\section{CONCLUSIONS}

In this paper, we introduced rMFBO, a building block to any MFBO method to make it robust to unreliable information sources, i.e., which do not decrease the regret on average when queried and therefore harmful to the optimization process. In particular, we showed that the regret bound of rMFBO can be tied to that of SFBO, with high probability. Upon extensive experiments, we further demonstrated that the current MFBO methods lack this notion of robustness, and that rMFBO was able to successfully fill this gap, while staying competitive when the auxiliary information sources are relevant.

The proposed rMFBO method relies on two hyperparameters, $c_1$ and $c_2$. While $c_1$ is theoretically grounded, 
$c_2$ was set to a single fixed value which, even though not having deeper theoretic grounding, produced satisfactory results across the wide range of experiments. Its soundness was further empirically assessed through an ablation study. Nevertheless, $c_2$ should adapt to the number of BO rounds, as would be expected from an entropy-based measure. A tentative adaptive approach, which produced satisfactory results, was considered in Supplementary \ref{supp_c2}. Gaining theoretical understanding on how this value should be selected is left for future research.
From a computational perspective, rMFBO keeps track of two acquisition trajectories, which leads to increased computation times, but negligible compared to the evaluation costs encountered in real-world settings. Lastly, the regret bound of rMFBO is practically useful only in the first rounds due to the exponential dependence on the number of BO rounds. Future research should use non-zero rejection probability of the multi-fidelity query proposals to derive a tighter regret bound for later BO rounds.

Any safety-critical MFBO application can benefit from a methodology such as rMFBO, as our algorithm gives guarantees against erroneous or even adversarial information sources.
Finally, rMFBO opens the door to a more systematic inclusion of human experts, with varying knowledge, within BO processes. Typically, these experts would have precise understanding on a specific region of the input domain, but would provide irrelevant feedback elsewhere.
Our algorithm makes it possible to take into account these novel information sources with varying degree of fidelity across the input domain, opening exciting opportunities in Bayesian optimization.

\subsubsection*{Acknowledgements}

Louis Filstroff was with the Department of Computer Science of Aalto University at the time this research was conducted. This work was supported by the Academy of Finland (Flagship programme: Finnish Center for Artificial Intelligence FCAI and decision 341763), EU Horizon 2020 (European Network of AI Excellence Centres ELISE, 951847; HumanE AI Net, 952026), UKRI Turing AI World-Leading Researcher Fellowship (EP/W002973/1). We also acknowledge the computational resources provided by the Aalto Science-IT Project from Computer Science IT.

\bibliographystyle{plainnat}
\bibliography{ref}

\clearpage

\appendix

\thispagestyle{empty}

\onecolumn 

\aistatstitle{Supplementary Material 

Multi-Fidelity Bayesian Optimization with Unreliable Information Sources}

\section{Additional experimental results}

\begin{figure*}[h]
\begin{center}
		\includegraphics[width=\textwidth]{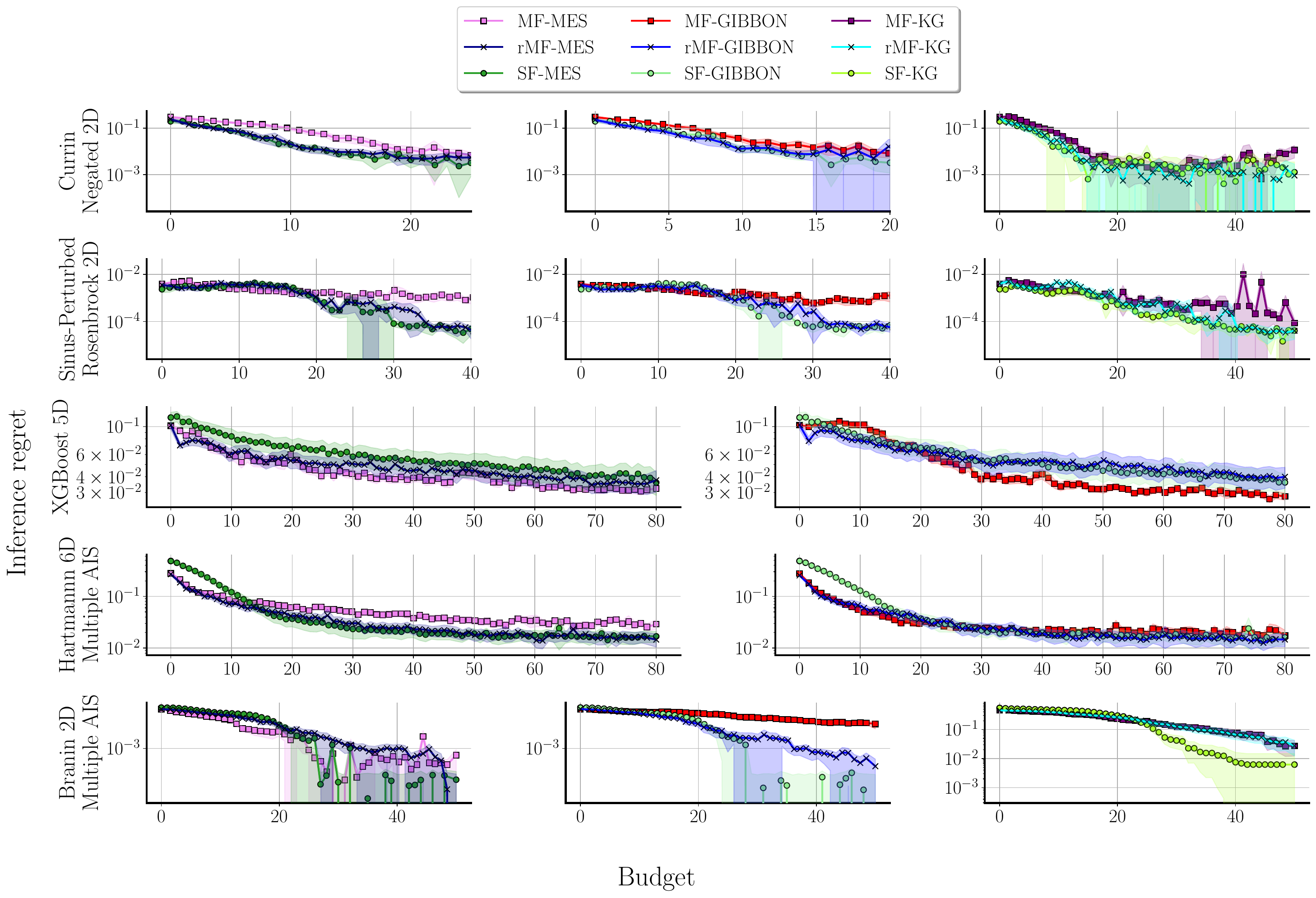}
	\end{center}
	\caption{Inference regret depicted over budget spent in five multi-fidelity problems, averaged over 100 repetitions. The first three problems have one auxiliary IS and the last two have three. Three BO acquisition functions (MES, GIBBON, KG) are tested with their multi- and single-fidelity variants. The proposed \textit{robust} multi-fidelity method is denoted by the letter `r', e.g.\ `rMF-MES'.}
\label{fig:infregret}
\end{figure*}

\begin{figure}[h]
  \centering
	\includegraphics[width=.8\textwidth]{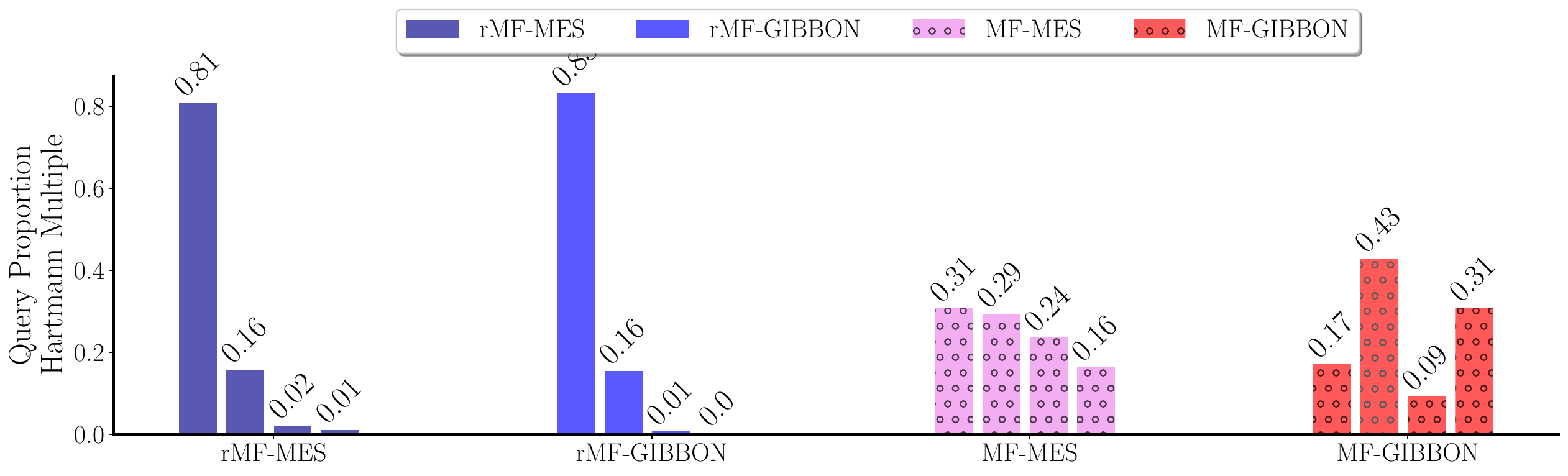}
	\includegraphics[width=.8\textwidth]{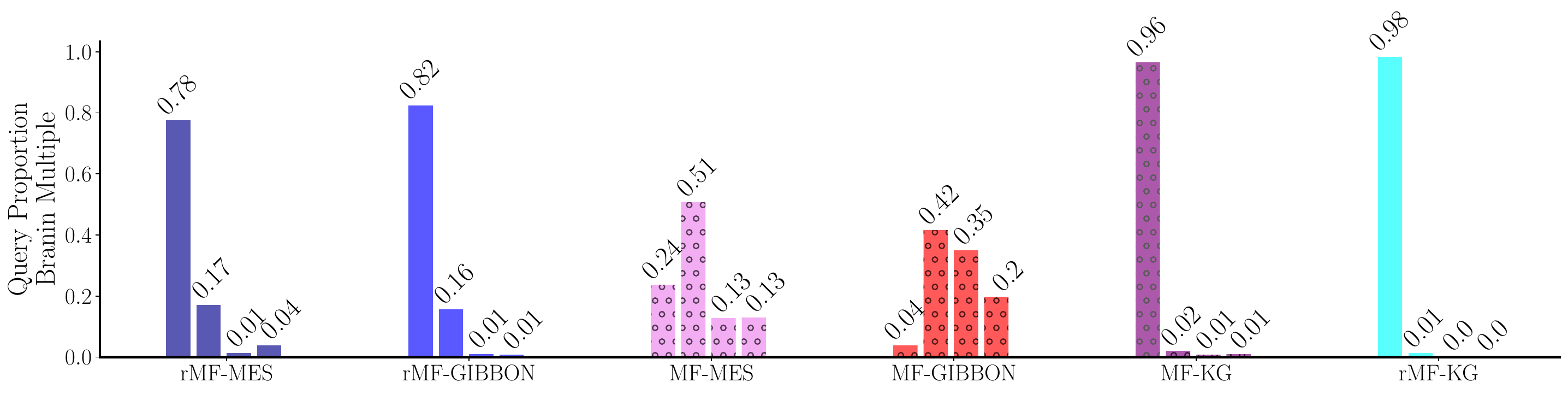}
	\caption{Distribution of IS queries. From left to right, the bars are sorted in the following order: Hartmann (resp.\ Branin) primary IS , AIS with degree of fidelity l = 0.8, AIS with l = 0.1 and finally the Rosenbrock (resp.\ Ackley) function. We used the downsampling kernel.  In the computation of $k_{\text{IS}}(\ell, \ell')$, we used $\ell=1,~\ell'=0.8,~\ell^{\prime\prime}=0.1$ and $\ell^{\prime\prime\prime}=0$.}
\label{fig:barplot}
\end{figure}

\begin{figure}[t]
  \centering
	\includegraphics[width=0.925\linewidth]{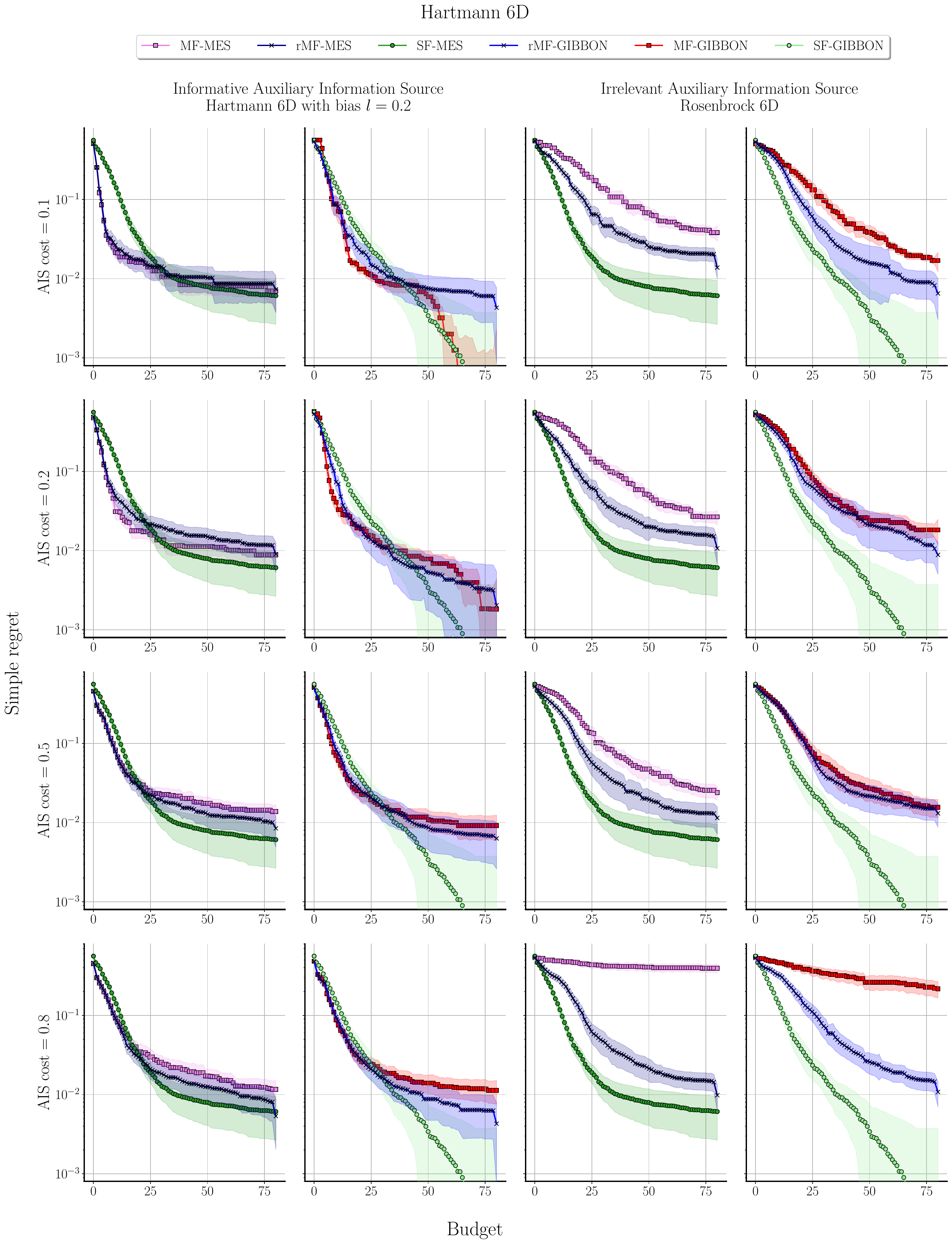}
	\caption{Simple regret depicted over budget spent for the Hartmann6D multi-fidelity problem, averaged over 100 repetitions. For each row, the auxiliary IS cost is varied. We used the downsampling kernel. In the computation of $k_{\text{IS}}(\ell, \ell')$, we used $\ell=1,~\ell'=0.2$ for the primary IS and the auxiliary IS, respectively.}
\label{fig:hartmanncost}
\end{figure}

\begin{figure}[t]
  \centering
	\includegraphics[width=0.925\linewidth]{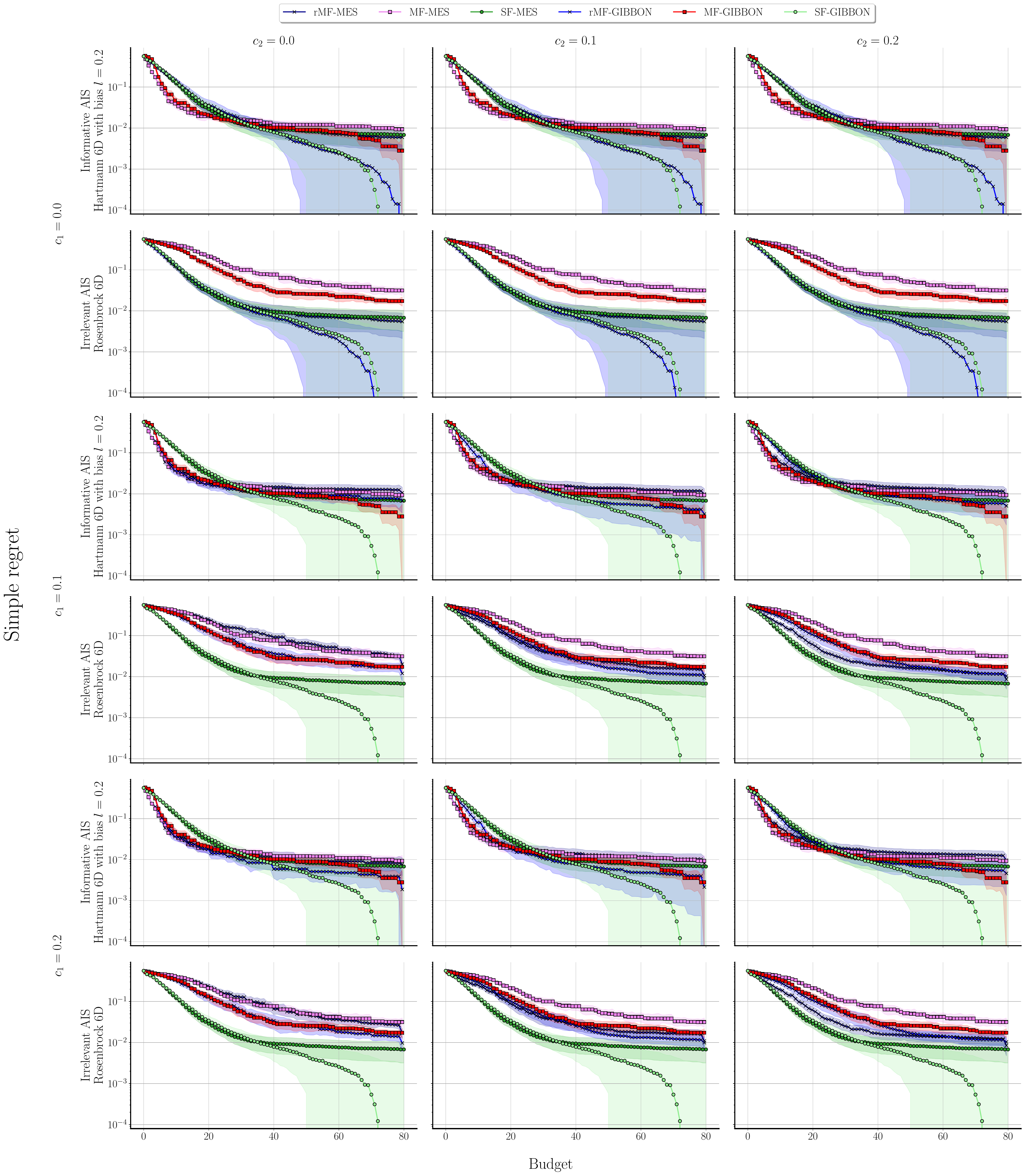}
	\caption{Simple regret depicted over budget spent for the Hartmann6D multi-fidelity problem, averaged over 100 repetitions. At the row level, the hyperparameter $c_1$ is varied, with even rows being the Hartmann/Hartmann0.2 (relevant IS) problem, while odd rows consider the Hartmann/HartmannRosenbrock (irrelevant IS) problem. At the column level, the hyperparameter $c_2$ is varied. We used the downsampling kernel. In the computation of $k_{\text{IS}}(\ell, \ell')$, we used $\ell=1,~\ell'=0.2$ for the primary IS and the auxiliary IS, respectively.}
\label{fig:hartmannablation}
\end{figure}

\clearpage

\section{Proofs}\label{supp_proofs}

\subsection{Noiseless scenario: Proposition \ref{bound_sequence}}\label{noiseless_scenario}

\begin{proof}[Proof of Proposition \ref{bound_sequence}]\label{proof_bound_sequence}

In the proof, we simplify the notations and denote the pseudo SFBO query at round $t$ by $\x_t = \x^{\text{pSF}}_t$, and  $\mathcal{D}_t = \mathcal{D}_t^{\text{pSF}}$ for the pseudo SFBO dataset. The SFBO query at round $t$  is denoted by $\x_t^{\text{SF}}$, as earlier. The acquisition function is treated as a function of an input-IS pair $(\x,\ell)$ and the dataset $\mathcal{D}$.
Let us think the dataset as a $t(d+1)$-dimensional vector $\mathcal{D}_t = (x_1^{(1)},\cdots,x_1^{(d)},\cdots,x_t^{(1)},\cdots,x_t^{(d)},y_1,...,y_t)$. Let us use a shorthand notation $\alpha(\x,m,\mathcal{D}) = \alpha(\x,\mathcal{D})$, and consider the next primary IS query
\begin{equation}\label{sf_alpha_opt}
\x_{t+1} = \argmax_{\x \in \mathcal{X}}\alpha(\x, \mathcal{D}_t)
\end{equation}
which defines an implicit function $\x_{t+1}(\mathcal{D}_t)$ such that $\x_{t+1}$ solves $\nabla_{\x}\alpha(\x, \mathcal{D}_t) = \mathbf{0}_{\mathbb{R}^d}$. By applying the implicit function theorem to the continuously differentiable function $\nabla_{\x}\alpha : \mathcal{X} \times (\mathcal{X}\times \mathbb{R})^{t} \rightarrow \mathbb{R}^d$ (Assumption \ref{alpha_diff}) with invertible Jacobian (Assumption \ref{alpha_nonsingular_Hessian}),
the rate of change of the next query with respect to the dataset can be defined as,
\begin{align}\label{dxdD}
\norm{\frac{\partial \x_{t+1}}{\partial \mathcal{D}_t}\left(\mathcal{D}_t\right)}_{\text{op}} = \norm{\left[\left(\mathbf{H}_{\alpha,\x}(\x_{t+1}(\mathcal{D}_t),\mathcal{D}_t) \right)^{-1}\frac{\partial \nabla_{\x}\alpha}{\partial\mathcal{D}_t^{(k)}}(\x_{t+1}(\mathcal{D}_t),\mathcal{D}_t)\right]_{k=1}^{t(d+1)}}_{\text{op}}
\end{align}
where $\norm{A}_{\text{op}} := \inf \{c>0 : \norm{A\x}_\infty \leq c\norm{\x}_\infty\ \forall \x \in \mathcal{X} \}$ is the operator norm, and $\mathbf{H}_{\alpha,\x}$ denotes the Hessian of $\x \mapsto \alpha(\x,\mathcal{D}_t)$. Specifically, the $(i,j)^{\text{th}}$-element of $\mathbf{H}_{\alpha,\x}$ reads as $\frac{\partial^2 \alpha}{\partial x_i \partial x_j}$. The $i^{\text{th}}$-element of $\frac{\partial\nabla_{\x} \alpha}{\partial\mathcal{D}^{(k)}_t}$ is $\frac{\partial^2 \alpha}{\partial\mathcal{D}^{(k)}_t\partial x_i}$, which denotes the partial derivative w.r.t.\ the first and the second variable of $\alpha(\cdot,\cdot)$.

We now show the proposition by induction.

$1^{\circ}$ Base case $t=1$: The claim follows by the design of Algorithm, since $\x^{\text{SF}}_{1}=\x_{1}$.

$2^{\circ}$ Induction step: Let us assume that $t \in \{1,...,T-2\}$. The outcome vector of the SFBO algorithm is $\y^{SF}_t = (y_1^{\text{SF}},...,y_{t}^{\text{SF}})$. The outcome vector of the pseudo-SFBO algorithm, $\y_t = (y_1,...,y_{t})$, consists of observations $y_{\tau} = f^{(m)}(\x_{\tau})$ and pseudo-observations $y_{\tau} = \mu_{\text{MF}}(\x_{\tau},m)$.

Consider the mapping $\x_{t+1} : (\mathcal{X}\times \mathbb{R})^{t} \rightarrow \mathcal{X}$ with the domain and codomain equipped with the sup-norms. By the above-mentioned implicit function theorem, there exists a neighbourhood such that $\x_{t+1}$ is continuously differentiable with bounded derivatives. Since $\mathcal{X}$ is a compact subset of $\mathbb{R}^d$, the spaces $((\mathcal{X}\times \mathbb{R})^{t},\norm{\cdot}_{\infty})$ and $(\mathcal{X},\norm{\cdot}_{\infty})$ are Banach spaces. Thus, by the mean value inequality on Banach spaces \citep[][Theorem 12.6]{baggett1992functional}, if we consider the closed line segment joining $\mathcal{D}_t$ to $\mathcal{D}_t^{\text{SF}}$ (using the convexity of $\mathcal{X}$), and define $M_t$ to be the maximum rate of change over the line segment,
\begin{align}\label{M_t}
M_t &:= \max_{\mathcal{D} \in \mathbb{D}_t }\norm{\frac{\partial \x_{t+1}}{\partial \mathcal{D}}\left(\mathcal{D}\right)}_{\text{op}},~~\text{where}\ \mathbb{D}_t = \left\{ \mathcal{D}\ \mid\ \mathcal{D} = (1-s)\mathcal{D}_t + s\mathcal{D}_t^{\text{SF}},\ s \in [0,1] \right\}
\end{align}
we can bound the closeness of the next queries at round $t$,
\begin{align*}
 \norm{\x^{\text{SF}}_{t+1} - \x_{t+1}}_{\infty} = \norm{\x_{t+1}(\mathcal{D}_t^{\text{SF}})-\x_{t+1}(\mathcal{D}_t)}_{\infty} &\leq \norm{\mathcal{D}_t^{\text{SF}}-\mathcal{D}_t}_{\infty} M_t.
\end{align*}

It remains to bound $\norm{\mathcal{D}_t^{\text{SF}}-\mathcal{D}_t}_{\infty}$. First, observe that,
$$\norm{\mathcal{D}_t^{\text{SF}}-\mathcal{D}_t}_{\infty} = \max\left\{\norm{\x_{1}^{\text{SF}}-\x_{1}}_{\infty},...,\norm{\x_{t}^{\text{SF}}-\x_{t}}_{\infty}, \left|y_{1}^{\text{SF}}-y_{1}\right|,...,\left|y_{t}^{\text{SF}}-y_{t}\right|\right\}.$$
Let us consider $|y_{\tau}^{\text{SF}}-y_{\tau}|$ for any $\tau \in \{1,...,t\}$. It holds that,
\begin{align*}
  |y_{\tau}^{\text{SF}}-y_{\tau}| =
  \begin{cases}
    |f(\x_{\tau}^{\text{SF}})-f(\x_{\tau})|, & \text{ if Line 8 of Algorithm~\ref{mfbo-algo} false at query } \tau\\
    |f(\x_{\tau}^{\text{SF}})-\mu_{\tau}(\x_{\tau})|, & \text{ if Line 8 true at query } \tau
  \end{cases}
\end{align*}
where we use the shorthand notations $f(\x) := f(\x,m)$ and $\mu_{\tau,\text{MF}}(\x) := \mu_{\text{MF}}(\x,m \mid \mathcal{D}_{\tau})$.

For the false case (real observation), we have
\begin{align}
|y_{\tau}^{SF}-y_{\tau}| &= |f(\x_{\tau}^{\text{SF}})-f(\x_{\tau})| \nonumber \\
&\leq \sqrt{d} \norm{\x_{\tau}^{\text{SF}}-\x_{\tau}}_2 \label{kernel_as} \\
&\leq d \norm{\x_{\tau}^{\text{SF}}-\x_{\tau}}_{\infty} \label{equiv_norms}  \\
&\leq d \varepsilon \tau\hat{M}_{\tau} d^{\tau} \label{ih} \\
&= \varepsilon \tau\hat{M}_{\tau} d^{\tau +1} \nonumber
\end{align}
with probability greater than $1-da\exp(-\frac{1}{b^2})$ by Assumption \ref{smallderivative} exploited in \eqref{kernel_as}. The inequalities \eqref{equiv_norms} and \eqref{ih} follow from the equivalence of the norms ($\norm{\x}_2 \leq \sqrt{d}\norm{\x}_{\infty}$) and the induction hypothesis, respectively.
For the true case (pseudo-observations), as before we have
\begin{align*}
|y_{\tau}^{\text{SF}}-y_{\tau}| &\leq |f(\x_{\tau}^{\text{SF}})-\mu_{\tau,\text{MF}}(\x_{\tau})|\\ 
&\leq |f(\x_{\tau}^{SF})-f(\x_{\tau})| + |f(\x_{\tau})-\mu_{\tau,\text{MF}}(\x_{\tau})|\\
&\leq \varepsilon \tau\hat{M}_{\tau} d^{\tau +1} + |f(\x_{\tau})-\mu_{\tau,\text{MF}}(\x_{\tau})|
\end{align*}
with probability greater than $1-da\exp(-\frac{1}{b^2})$. The first term represents the error for being off from the single-fidelity algorithm acquisition track, and the second term is the prediction error of the MOGP surrogate model. Given that  the objective $f$ is drawn from a MOGP with same covariance kernel than that of the MOGP surrogate in the Algorithm (Assumption~\ref{f_is_GP}), the latter term can be bounded.
For any constant $C >0$, at round $t$,

\begin{align*}
    \prob\left(\frac{f(\x)-\mu_{t,\text{MF}}(\x)}{\sigma_{t,\text{MF}}(\x)} > C\right) &\leq \frac{1}{2}\exp\left(-\frac{C^2}{2}\right)\\
    \prob\left(|f(\x)-\mu_{t,\text{MF}}(\x)| > C\sigma_{t,\text{MF}}(\x)\right) &\leq \exp\left(-\frac{C^2}{2}\right)\\
    \prob\left(|f(\x)-\mu_{t,\text{MF}}(\x)| \leq C\sigma_{t,\text{MF}}(\x)\right) &\geq 1-\exp\left(-\frac{C^2}{2}\right).
\end{align*}
Pick $C=\frac{\varepsilon}{\sigma_{t,\text{MF}}(\x)}$. Then, we know that $|f(\x)-\mu_{t,\text{MF}}(\x)| \leq \varepsilon$ holds at least with probability $1-\exp\left(-\frac{\varepsilon^2}{2\sigma_{t,\text{MF}}^2(\x)}\right)$. 

If $\sigma_t(\x) \leq \frac{\varepsilon}{\sqrt{-2\log(1-q)}}$, then $|f(\x)-\mu_{t,\text{MF}}(\x)| \leq \varepsilon$ holds with probability greater than $q$. Therefore, $|f(\x_{\tau})-\mu_{\tau,\text{MF}}(\x_{\tau})| \leq \varepsilon$, and
\begin{align*}
|y_{\tau}^{\text{SF}}-y_{\tau}| &\leq \varepsilon \tau\hat{M}_{\tau} d^{\tau +1} + \varepsilon = \varepsilon(\tau\hat{M}_{\tau} d^{\tau +1} + 1).
\end{align*}

By combining the results, we have
\begin{align*}
\norm{\mathcal{D}_t^{\text{SF}}-\mathcal{D}_t}_{\infty} &= \max\left\{\norm{\x_{1}^{\text{SF}}-\x_{1}}_{\infty},\cdots,\norm{\x_{t}^{\text{SF}}-\x_{t}}_{\infty}, \left|y_{1}^{\text{SF}}-y_{1}\right|,\cdots,\left|y_{t}^{\text{SF}}-y_{t}\right|\right\}\\
&\leq \max\left\{\varepsilon M_0 d,\cdots,\varepsilon t d^t \hat{M}_{t}, \varepsilon(M_0 d +1 ),\cdots,\varepsilon(t d^{t+1} \hat{M}_{t} + 1)\right\}\\
&= \varepsilon\left( t d^{t+1} \hat{M}_{t} + 1\right)
\end{align*}
with probability greater than $q\left(1-da\exp(-\frac{1}{b^2})\right)$. Note that the event $|f(\x)-\mu_{t,\text{MF}}(\x)| \leq \varepsilon$ and the event in Assumption \ref{smallderivative} are independent given the assumptions.
For all $t \in \{1,...,T-2\}$, 

\begin{align*}
 \norm{\x^{\text{SF}}_{t+1} - \x_{t+1}}_{\infty} &\leq \varepsilon\left( t d^{t+1} \max_{S \in 2^{\llbracket t-1 \rrbracket}}\prod_{k \in S}M_k + 1\right)M_{t} \leq \varepsilon\left( tM_t d^{t+1} \max_{S \in 2^{\llbracket t-1 \rrbracket}}\prod_{k \in S}M_k + d^{t+1} \max_{S \in 2^{\llbracket t \rrbracket}}\prod_{k \in S}M_k\right)\\
 &= \varepsilon d^{t+1} \left( t\max_{S \in 2^{\llbracket t \rrbracket}}\prod_{k \in S}M_k + \max_{S \in 2^{\llbracket t \rrbracket}}\prod_{k \in S}M_k\right)\\
 &= \varepsilon(t+1)\hat{M}_{t+1} d^{t+1}
\end{align*}
holds with probability greater than $q\left(1-da\exp(-\frac{1}{b^2})\right)$.
\end{proof}

\subsection{Noisy scenario: Proposition \ref{bound_sequence}}\label{noisy_scenario}

We consider a noisy scenario, that is, $\sigma_{\text{noise}} > 0$. It can be shown that Proposition \ref{bound_sequence} holds if $\frac{\sqrt{\sigma_{\text{noise}}}}{\varepsilon} \leq (d^{t+1}\hat{M}_t - 1)/2$ for all $t$ (with a negligible lower probability, specifically a factor of $\textrm{erf}\left(\frac{1}{2\sqrt{\sigma_{\text{noise}}}}\right)\textrm{erf}\left(\frac{1}{\sqrt{2\sigma_{\text{noise}}}}\right)$ lower). Given empirical study on values $\hat{M}_t$ (Supplementary \ref{supp_computeM}), it is highly unlikely that this condition does not hold with reasonable values for $\varepsilon$ and $\sigma_{\text{noise}}$.

\begin{proof}\label{proof_noisy_scenario}

Proof \ref{proof_bound_sequence} should be modified as follows.

Let us consider $|y_{\tau}^{\text{SF}}-y_{\tau}|$ for any $\tau \in \{1,...,t\}$. It holds that,
\begin{align*}
  |y_{\tau}^{\text{SF}}-y_{\tau}| =
  \begin{cases}
    |f(\x_{\tau}^{\text{SF}})+\epsilon-f(\x_{\tau})-\epsilon'|, & \text{ if Line 8 of Algorithm~\ref{mfbo-algo} false at query } \tau\\
    |f(\x_{\tau}^{\text{SF}})+\epsilon-\mu_{\tau, \text{MF}}(\x_{\tau},m)|, & \text{ if Line 8 false at query } \tau.
  \end{cases}
\end{align*}

Note that $|\epsilon-\epsilon'|$ follows a half-normal distribution with scale parameter $\sqrt{2}\sigma_{\text{noise}}$, and $|\epsilon|$ follows a half-normal distribution with scale parameter $\sigma_{\text{noise}}$. This implies that $P(|\epsilon-\epsilon'| \leq \sqrt{\sigma_{\text{noise}}}) = \textrm{erf}\left(\frac{1}{2\sqrt{\sigma_{\text{noise}}}}\right)$ and $P(|\epsilon| \leq \sqrt{\sigma_{\text{noise}}}) = \textrm{erf}\left(\frac{1}{\sqrt{2\sigma_{\text{noise}}}}\right)$. 

For the false case (real observation), we have
\begin{align*}
|y_{\tau}^{\text{SF}}-y_{\tau}| &= |f(\x_{\tau}^{\text{SF}})+\epsilon-f(\x_{\tau})-\epsilon'|  \\
&\leq |f(\x_{\tau}^{\text{SF}})-f(\x_{\tau})| + |\epsilon-\epsilon'|  \\ 
&\leq \frac{1}{\sqrt{d}} \norm{\x_{\tau}^{\text{SF}}-\x_{\tau}} + \sqrt{\sigma_{\text{noise}}}\\
&\leq \frac{1}{\sqrt{d}} \sqrt{d} \varepsilon \tau\hat{M}_{\tau} + \sqrt{\sigma_{\text{noise}}}  \\
&= \varepsilon \tau\hat{M}_{\tau} + \sqrt{\sigma_{\text{noise}}} 
\end{align*}
with probability greater than $\textrm{erf}\left(\frac{1}{2\sqrt{\sigma_{\text{noise}}}}\right)\left(1-da\exp(-\frac{1}{b^2})\right)$ by Assumption \ref{smallderivative}. The last two inequalities follow from the induction hypothesis and the equivalence of the norms, $\norm{\x} \leq \sqrt{d}\norm{\x}_{\infty}$.

For the true case (pseudo-observation), as before we have,
\begin{align*}
|y_{\tau}^{SF}-y_{\tau}| &\leq |f(\x_{\tau}^{SF})-\mu(\x_{\tau})| + |\epsilon|\\ 
&\leq |f(\x_{\tau}^{SF})-f(\x_{\tau})| + |f(\x_{\tau})-\mu(\x_{\tau})| + \sqrt{\sigma_{\text{noise}}} \\
&\leq \varepsilon \tau\hat{M}_{\tau} + |f(\x_{\tau})-\mu(\x_{\tau})| + 2\sqrt{\sigma_{\text{noise}}} \\
&\leq \varepsilon \tau\hat{M}_{\tau} + \varepsilon + 2\sqrt{\sigma_{\text{noise}}} \\
&= \varepsilon(\tau\hat{M}_{\tau} + 1 + \frac{2\sqrt{\sigma_{\text{noise}}}}{\varepsilon})
\end{align*}
with probability greater than $\textrm{erf}\left(\frac{1}{2\sqrt{\sigma_{\text{noise}}}}\right)\textrm{erf}\left(\frac{1}{\sqrt{2\sigma_{\text{noise}}}}\right)\left(1-da\exp(-\frac{1}{b^2})\right)$. 

Hence, in this case we have
\begin{align*}
&\norm{\mathcal{D}_t^{\text{SF}}-\mathcal{D}_t}_{\infty} \\
&= \max\left\{\norm{\x_{1}^{\text{SF}}-\x_{1}}_{\infty},...,\norm{\x_{t}^{\text{SF}}-\x_{t}}_{\infty}, \left|y_{1}^{\text{SF}}-y_{1}\right|,...,\left|y_{t}^{\text{SF}}-y_{t}\right|\right\}\\
&\leq \max\left\{\varepsilon M_0 d,\cdots,\varepsilon t d^t \hat{M}_{t}, \varepsilon(M_0 d + 1+  \frac{2\sqrt{\sigma_{\text{noise}}}}{\varepsilon}),\cdots,\varepsilon\left(t d^{t+1} \hat{M}_{t} +  1 + \frac{2\sqrt{\sigma_{\text{noise}}}}{\varepsilon}\right)\right\}\\
&= \varepsilon\left( t d^{t+1} \hat{M}_{t} +  1 + \frac{2\sqrt{\sigma_{\text{noise}}}}{\varepsilon}\right)
\end{align*}
with probability greater than $\textrm{erf}\left(\frac{1}{2\sqrt{\sigma_{\text{noise}}}}\right)\textrm{erf}\left(\frac{1}{\sqrt{2\sigma_{\text{noise}}}}\right)\left(1-da\exp(-\frac{1}{b^2})\right)q$.

For all $t \in \{1,...,T-2\}$, 

\begin{align*}
 \norm{\x^{\text{SF}}_{t+1} - \x_{t+1}}_{\infty} &\leq \varepsilon\left( t d^{t+1} \max_{S \in 2^{\llbracket t-1 \rrbracket}}\prod_{k \in S}M_k + 1 + \frac{2\sqrt{\sigma_{\text{noise}}}}{\varepsilon}\right)M_{t} \\
 &\leq \varepsilon\left( tM_t d^{t+1} \max_{S \in 2^{\llbracket t-1 \rrbracket}}\prod_{k \in S}M_k + d^{t+1} \max_{S \in 2^{\llbracket t \rrbracket}}\prod_{k \in S}M_k\right)\\
 &= \varepsilon d^{t+1} \left( t\max_{S \in 2^{\llbracket t \rrbracket}}\prod_{k \in S}M_k + \max_{S \in 2^{\llbracket t \rrbracket}}\prod_{k \in S}M_k\right)\\
 &= \varepsilon(t+1)\hat{M}_{t+1} d^{t+1}
\end{align*}
holds with probability greater than $\textrm{erf}\left(\frac{1}{2\sqrt{\sigma_{\text{noise}}}}\right)\textrm{erf}\left(\frac{1}{\sqrt{2\sigma_{\text{noise}}}}\right)\left(1-da\exp(-\frac{1}{b^2})\right)q$, when $1 + \frac{2\sqrt{\sigma_{\text{noise}}}}{\varepsilon} \leq d^{t+1}\hat{M}_t$. Specifically, when $\frac{\sqrt{\sigma_{\text{noise}}}}{\varepsilon} \leq (d^{t+1}\hat{M}_t - 1)/2$.

\end{proof}

\subsection{Theorem \ref{no_harm_theorem}}

\begin{proof}[Proof of Theorem \ref{no_harm_theorem}]\label{proof_no_harm_theorem}
First, note that for any budget and any choice (simple or Bayes optimal) it holds,
\begin{align*}
    &R(\Lambda, \x_{\text{choice}}^{\text{SF}}) - R(\Lambda, \x_{\text{choice}}^{\text{rMF}})  = f(\x_{\text{choice}}^{\text{rMF}}) - f(\x_{\text{choice}}^{\text{SF}}).
\end{align*}

For the simple choice, we have $\x_{\text{choice}}^{\text{SF}} = \argmax_{t \in \llbracket T \rrbracket}f(\x_t^{\text{SF}})$, and $\x_{\text{choice}}^{\text{rMF}} = \argmax_{t \in \llbracket T^{(m)} \rrbracket}f(\x_t)$ where $\x_1,...,\x_{T^{(m)}}$ is the primary IS acquisition sequence returned by Algorithm \ref{mfbo-algo} (pseudo-queries removed from the output sequence). With a slight abuse of notation we write $T$ and $T^{(m)}$ for both the number of queries (Definition \ref{n_queries}) and the corresponding index sets (e.g.\ $t \in T^{(m)}$ means that $y_t$ is not a pseudo-observation).

For all $t \in T$, it holds that $f(\x_{\text{choice}}^{\text{rMF}}) - f(\x_{\text{choice}}^{\text{SF}}) > - \varepsilon T \hat{M}_T d^{T+1}$ by Corollary \ref{instant_regret_bounded} with probability greater than $q\left(1-da\exp(-\frac{1}{b^2})\right)$. The problem is that the values $y_t$ for $t \in T \setminus T^{(m)}$ are never observed, and we cannot take minimum over these “NaN values" (i.e., $\argmax$ is not well-defined) in the computation of $\x^{\text{rMF}}_{\text{choice}}$. To solve this issue, a quantity $\lambda_m$ was saved from the budget $\Lambda$ (Algorithm~\ref{mfbo-algo}, Lines 24-26), thus ensuring that if the true maximizer is one of the pseudo-observations, then it will be queried at primary IS, leading to an actual observation.

Specifically, for the last query at $T+1$. Note that every pseudo-query is in $S = \left\{ \x \in \mathcal{X}\ \mid\ \sigma_{T,\text{MF}}(\x,m) \leq c_1\right\}$. For any $\x \in S$, it holds that $P(|f(\x)-\mu_{T,\text{MF}}(\x)| \leq \varepsilon) \geq q$ (see Proof \ref{proof_bound_sequence}). Thus,
\begin{align*}
    \lvert\max_{\x \in S}f(\x) - f(\argmax_{x \in S}\mu_{T,\text{MF}}(\x))\rvert \leq 2\varepsilon,
\end{align*}
with probability greater than $q$.
\end{proof}

\section{Full version of the algorithm}\label{sec:fullversion}

Some modifications can be done to improve the empirical performance of Algorithm \ref{mfbo-algo} while all the theoretical results of Section \ref{theoretical_results} still hold.

\paragraph{Posterior mean update of the pseudo observations:}
Lines 12-13 of Algorithm \ref{mfbo-algo} are for simplicity, the algorithm can be made more efficient by adjusting these. All the pseudo-observations in $\mathcal{D}^{\text{pSF}}$ can be updated to correspond to the most recent predictive mean estimate of the joint surrogate model at the current round $t$. This does not break the condition $\sigma_{\text{MF}}(\x^{\text{pSF}}_t,m) \leq c_1$, since the posterior variance cannot increase as new data is added. We go further and also check whether the single-fidelity GP surrogate can provide a more accurate estimate of the pseudo-observation in the sense of the accuracy of a nearest neighbor. For the pseudo-observation, we choose the most recent predictive mean estimate of the single-fidelity surrogate model if $\lvert f(\x^{\text{NN}}_t,m) - \mu_{\text{SF}}(\x^{\text{pSF}}_t)\rvert \le \lvert f(\x^{\text{NN}}_t,m) - \mu_{\text{MF}}(\x^{\text{pSF}}_t, m)\rvert$ where $\x^{\text{nn}}_t$ is the nearest neighbor of $\x^{\text{pSF}}_t$ in the primary IS training data.

\paragraph{Multiple auxiliary IS relevance check:}
When the number of auxiliary ISs is more than two, the algorithm can give a chance also to other auxiliary IS, even if the first proposed query $\x^{\text{MF}}_t$ at IS $\ell_t$ is considered irrelevant by the algorithm. Looping over all IS, and checking their relevance, does not violate the conditions in Algorithm \ref{mfbo-algo}, so the theoretical results are preserved. 

\paragraph{Relevance check for primary IS:}
When $(\x^{\text{MF}}_t,\ell_t)$ with $\ell_t=m$ is proposed, we can either have or not have a relevance check for that primary IS query. We consider a version that does not have a relevance check, which means that if $\ell_t=m$, the query is automatically accepted. 

The pseudo code of the full algorithm is presented in Algorithm \ref{mfbo-algo-full}. Blue lines correspond to addition w.r.t.\ the first improvement, red lines to the second, and purple lines to the third.

\section{Computing constants $M_t$}\label{supp_computeM}

Recall that the formula for $M_t$ presented in Equations \eqref{dxdD} and \eqref{M_t}. The optimization over $\mathbb{D}_t$ makes the computation of $M_t$ expensive. To avoid this, we consider a lower bound for $M_t$, defined as,
\begin{align}\label{lower_M}
\ubar{M}_t := \norm{\left[\left(\mathbf{H}_{\alpha,\x}(\x_{t+1}(\mathcal{D}_t),\mathcal{D}_t) \right)^{-1}\frac{\partial \nabla_{\x}\alpha}{\partial\mathcal{D}_t^{(k)}}(\x_{t+1}(\mathcal{D}_t),\mathcal{D}_t)\right]_{k=1}^{t(d+1)}}_{\text{op}},
\end{align}
where $\norm{A}_{\text{op}} := \inf \{c>0 : \norm{A\x}_\infty \leq c\norm{\x}_\infty\ \forall \x \in \mathcal{X} \}$ is the operator norm, and $\mathbf{H}_{\alpha,\x}$ denotes the Hessian of $\x \mapsto \alpha(\x,\mathcal{D}_t)$. Specifically, the $(i,j)^{\text{th}}$-element of $\mathbf{H}_{\alpha,\x}$ reads as $\frac{\partial^2 \alpha}{\partial x_i \partial x_j}$. The $i^{\text{th}}$-element of $\frac{\partial\nabla_{\x} \alpha}{\partial\mathcal{D}^{(k)}_t}$ is $\frac{\partial^2 \alpha}{\partial\mathcal{D}^{(k)}_t\partial x_i}$, which denotes the partial derivative w.r.t.\ the first and the second variable of $\alpha(\cdot,\cdot)$. Note that $\mathbf{H}_{\alpha,\x}(\mathbf{a},\mathbf{b}) \in \mathbb{R}^{d\times d}$ and $\frac{\partial\nabla_{\x} \alpha}{\partial\mathcal{D}^{(k)}_t}(\mathbf{a},\mathbf{b}) \in \mathbb{R}^d$ for $\mathbf{a} \in \mathbb{R}^{d}, \mathbf{b} \in \mathbb{R}^{t(d+1)}$.

The gradient of $\nabla \alpha_{\x}$ can be obtained by exploiting the automatic differentiation tools available in different programming frameworks. We used the BoTorch-GPyTorch ecosystems \citep{balandat2020botorch,gardner2018gpytorch}. Matrices in Equation~\eqref{lower_M} need not to compute separately, but instead by taking the Jacobian of $(\x_{t+1},\mathcal{D}_t) \mapsto \nabla_{\x}\alpha(\x_{t+1}, \mathcal{D}_t)$, and by considering its sub-matrices, all the terms can be obtained. However, the automatic differentiation comes at the cost of possible numerical instability. Especially, a reliable estimate of the Hessian $\mathbf{H}_{\alpha,\x}$ turned out to be difficult to obtain, resulting often a Hessian with complex eigen values and lacking symmetry. However, we run an experiment where the Hessian was forced to be symmetric and a large jitter term was added to the diagonal. The results on Rosenbrock 2D with rMF-GIBBON over 20 repetitions are depicted in Table \ref{table_mvalues}.

\begin{table}[t]
\centering
\begin{tabular}{|c|c|c|c|c|c|c|c|c|c|}
	\hline
	\textbf{Round} & $1$ & $2$ & $3$ & $4$ & $5$ & $6$ & $7$ & $8$ & $9$ \\
	\hline\hline
	Mean & 0.823668 & 1.081626 & 1.005563 & 1.199914 & 1.483624 &  1.454633 & 1.854412 & 2.625723 & 3.606273\\
	\hline
	Std & 0.398510 & 0.475530 & 0.358448 & 0.610190 & 0.559461 & 0.511228 & 0.903042 & 1.302165 & 1.784013\\
	\hline
\end{tabular}
\caption{The mean and standard deviation of $\ubar{M}_t, t \in \llbracket9\rrbracket$, over 20 repetitions.} 
\label{table_mvalues}
\end{table}

$\ubar{M}_t$ grows as more data is obtained (as $t$ grows), as expected. Namely, $\ubar{M}_t$ is same as (the maximum over $j$) the sum over $(|\partial x_j / \partial \mathcal{D}_{t,1}|,...,|\partial x_j / \partial \mathcal{D}_{t,t}|)$, where $x_j$ denotes $j^{th}$ coordinate of $\x_{t+1}$ and $\mathcal{D}_{t,i}$ denotes $i^{th}$ data point of $\mathcal{D}_{t}$. That is, as the number of data points grows, the number of terms in the sum grows also.

\begin{minipage}[t]{9.8cm}
\begin{algorithm}[H]
\caption{Full version of robust MFBO algorithm}
\label{mfbo-algo-full}
    \begin{algorithmic}
        \STATE \textbf{Input}: Budget $\Lambda$, costs $(\lambda_1,...,\lambda_m)$, acquisition function $\alpha$, hyperparameters $c_1$ and $c_2$, relevance measure $s$
        \STATE Initialize $\mathcal{D}^{\text{pSF}},\mathcal{D}^{\text{MF}}$
        \STATE Perform Bayesian updates $\mu_{\text{SF}},\sigma_{\text{SF}},\mu_{\text{MF}},\sigma_{\text{MF}}$
        \color{blue}
        \STATE $p_{\text{obs}} \gets \{\}$
        \color{black}
        \STATE $t \gets 1$
        \WHILE {$\lfloor \Lambda / \lambda_m \rfloor \geq 2 \lambda_m$}
            \STATE $\x^{\text{pSF}}_t \gets \argmax_{\x} \alpha(\x,m \mid  \mu_{\text{SF}},\sigma_{\text{SF}})$
            \STATE condition1 $\gets \sigma_{\text{MF}}(\x^{\text{pSF}}_t,m) \leq c_1$
            \color{red}
            \STATE condition2 $\gets$ False
            \IF{condition1}
            \color{black}
            \STATE $(\x^{\text{MF}}_t,\ell_t) \gets \argmax_{\x,\ell} \alpha(\x,\ell  \mid  \mu_{\text{MF}},\sigma_{\text{MF}})$
            \color{purple}
            \IF{$\ell_t=m$}
            \STATE condition2 $\gets$ True
            \ELSE
            \color{red}
            \STATE $\textrm{ISleft} \gets \llbracket m -1 \rrbracket$
            \WHILE{$|\textrm{ISleft}|>0$ \AND \NOT condition2}
            \STATE $(\x^{\text{MF}}_t,\ell_t) \gets \argmax_{\x \in \mathcal{X}, \ell \in \textrm{ISleft}} \alpha(\x,\ell  \mid  \mu_{\text{MF}},\sigma_{\text{MF}})$
            \IF{$s(\x^{\text{MF}}_t,\ell_t) \geq c_2$}
            \STATE condition2 $\gets$ True
            \ELSE
            \STATE $\textrm{ISleft} \gets \textrm{ISleft} \setminus \{\ell_t\}$
            \ENDIF
            \ENDWHILE
            \ENDIF
            \ENDIF
            \color{black}
            \IF{condition1 \AND condition2}
                \color{blue}
                \STATE $p_{\text{obs}} \gets p_{\text{obs}} \cup \{t\}$
                \color{black}
                \STATE $y_t \gets f(\x^{\text{MF}}_t,\ell_t)$
                \STATE $\mathcal{D}^{\text{MF}} \gets \mathcal{D}^{\text{MF}} \cup \{((\x^{\text{MF}}_t,\ell_t),y_t)\}$
                \STATE Perform Bayesian updates $\mu_{\text{MF}},\sigma_{\text{MF}}$
                \STATE $y_t \gets \mu_{\text{MF}}(\x^{\text{pSF}}_t,m)$  \ \# pseudo-observation
                \STATE $\mathcal{D}^{\text{pSF}} \gets \mathcal{D}^{\text{pSF}} \cup \{(\x^{\text{pSF}}_t,y_t)\}$
                \STATE $\Lambda \gets \Lambda - \lambda_{\ell_t}$
            \ELSE
                \STATE $y_t \gets f(\x^{\text{pSF}}_t,m)$
                \STATE $\mathcal{D}^{\text{pSF}} \gets \mathcal{D}^{\text{pSF}} \cup \{(\x^{\text{pSF}}_t,y_t)\}$
                \STATE $\mathcal{D}^{\text{MF}} \gets \mathcal{D}^{\text{MF}} \cup \{((\x^{\text{pSF}}_t,m),y_t)\}$
                \STATE $\Lambda \gets \Lambda - \lambda_m$
            \ENDIF
            \STATE Perform Bayesian updates $\mu_{\text{SF}},\sigma_{\text{SF}},\mu_{\text{MF}},\sigma_{\text{MF}}$
            \color{blue}
            \STATE \textcolor{blue}{$\mathcal{D}^{\text{pSF}} \gets \textrm{UPDATE-PSEUDO-OBS}(\mathcal{D}^{\text{pSF}},\mathcal{D}^{\text{MF}},\mu_{\text{MF}},\mu_{\text{pSF}},p_{\text{obs}})$}
            \STATE \textcolor{blue}{Perform Bayesian updates $\mu_{\text{SF}},\sigma_{\text{SF}}$}
            \color{black}
            \STATE $t \gets t + 1$
        \ENDWHILE
        \STATE $S \gets \left\{ \x \in \mathcal{X}\ \mid\ \sigma_{\text{MF}}(\x,m) \leq c_1\right\}$
        \STATE $\x^{\text{pSF}}_t \gets \argmax_{\x \in S} \mu_{\text{MF}}(\x,m)$
        \STATE $y_t \gets f(\x^{\text{pSF}}_t,m)$
    \end{algorithmic}
\end{algorithm}
\end{minipage}
\hfil
\begin{minipage}[t]{7.2cm}
\begin{algorithm}[H]
\caption{UPDATE-PSEUDO-OBS}
    \begin{algorithmic}
    \color{blue}
        \STATE \textbf{Input}: $\mathcal{D}^{\text{pSF}},\mathcal{D}^{\text{MF}},\mu_{\text{MF}},\mu_{\text{pSF}},pobs$ 
        \FOR{t in pobs}
            \STATE $\x^{nn}_t \gets \text{NearestNeighbor}(\x^{\text{pSF}}_t,\mathcal{D}^{\text{MF}}[\ell=m])$
            \STATE $y \gets f(\x^{nn}_t,m)$
            \IF{$|\mu_{\text{MF}}(\x,m)-y| >
            |\mu_{\text{SF}}(\x)-y|$}
            \STATE $\mathcal{D}^{\text{pSF}}[y_t] \gets \mu_{\text{SF}}(\x^{\text{pSF}}_t)$
        \ELSE
            \STATE $\mathcal{D}^{\text{pSF}}[y_t] \gets \mu_{\text{MF}}(\x^{\text{pSF}}_t,m)$
        \ENDIF
        \ENDFOR
        \RETURN $\mathcal{D}^{\text{pSF}}$
    \color{black}
    \end{algorithmic}
\end{algorithm}  
\end{minipage}

\clearpage

\section{Hyperparameter $c_2$}\label{supp_c2}

\subsection{Non-adaptive strategy}

In experiments, we use a constant value $c_2=0.1$ over all BO rounds. To understand how $c_2$ is connected to the maximum information gain of the primary IS, we consider the maximum entropy search formula.
\cite{wang2017max} gives formula for the information gain with single $y^*$ draw (Formula 6, $K=1$), 
\begin{align*}
\textrm{I}(\{\x,y\}; y_*\mid D_t) &\approx \frac{\gamma_{y_*}( \x)\psi(\gamma_{y_*}( \x))}{2\Psi(\gamma_{y_*}(\x))} - \log(\Psi(\gamma_{y_*}( \x))),
\end{align*}
where $\psi$ is the probability density function and $\Psi$ the cumulative density function of a normal distribution, and $\gamma_{y_*}(\x) = \frac{y_* - \mu_t(\x)}{\sigma_t(\x)}$. The information gain $\textrm{I}$ is unbounded above but rarely in practice greater than $-\log(1/2)$, which is achieved when $\gamma_{y_*}(\x) = 0$. Then, roughly speaking, $c_2 = 0.1$ implies that the AIS query should give at least about $15\%$ of the maximum information gain. We recommend setting $c_2 = -u\log(1/2)$, where $u$ is the percent of the maximum information gain required for a cost-adjusted AIS query. We found that $u=15\%$ was a good default value.

\subsection{Adaptive strategy}\label{sec:adap}

As entropy decreases during BO rounds, the information gain also decreases. For this reason, we also consider a strategy for setting $c_2$, which adjusts to the amount of entropy at round $t$. Furthermore, this strategy automatically sets $c_2$ without the user having to specify it. We consider adaptively set $c_2(t) = a \int \text{I}(f(\x,m), f_* \mid \mathcal{D}^{\text{SF}}_t)d\x$, where $a$ is a positive constant. When $a = 1/\textrm{vol}(\mathcal{X})$, the threshold corresponds to the average information gain coming from the primary IS. However, this is too soft threshold value, as there are often large areas of space where the information gain is negligible, which in turn lowers the average information gain. For this reason, we found that a higher threshold works better empirically. Specifically, we found that the value $a = 100/\textrm{vol}(\mathcal{X})$ works well.

The information gain $\text{I}(f(\x,m), f_* \mid \mathcal{D}^{\text{SF}}_t)$ is computed by using the GP model that is trained on PIS data only. Pseudo-observations are not considered, so that they do not distort the information gain estimate. This requires one more GP model to be trained in the robust MFBO algorithm.
Figure~\ref{fig:bigmatrixadap} reproduces the results previously displayed in Figure~\ref{fig:bigmatrix} using now the proposed adaptive criteria for $c_2$.

\begin{figure}[t]
\begin{center}
		\includegraphics[width=\textwidth]{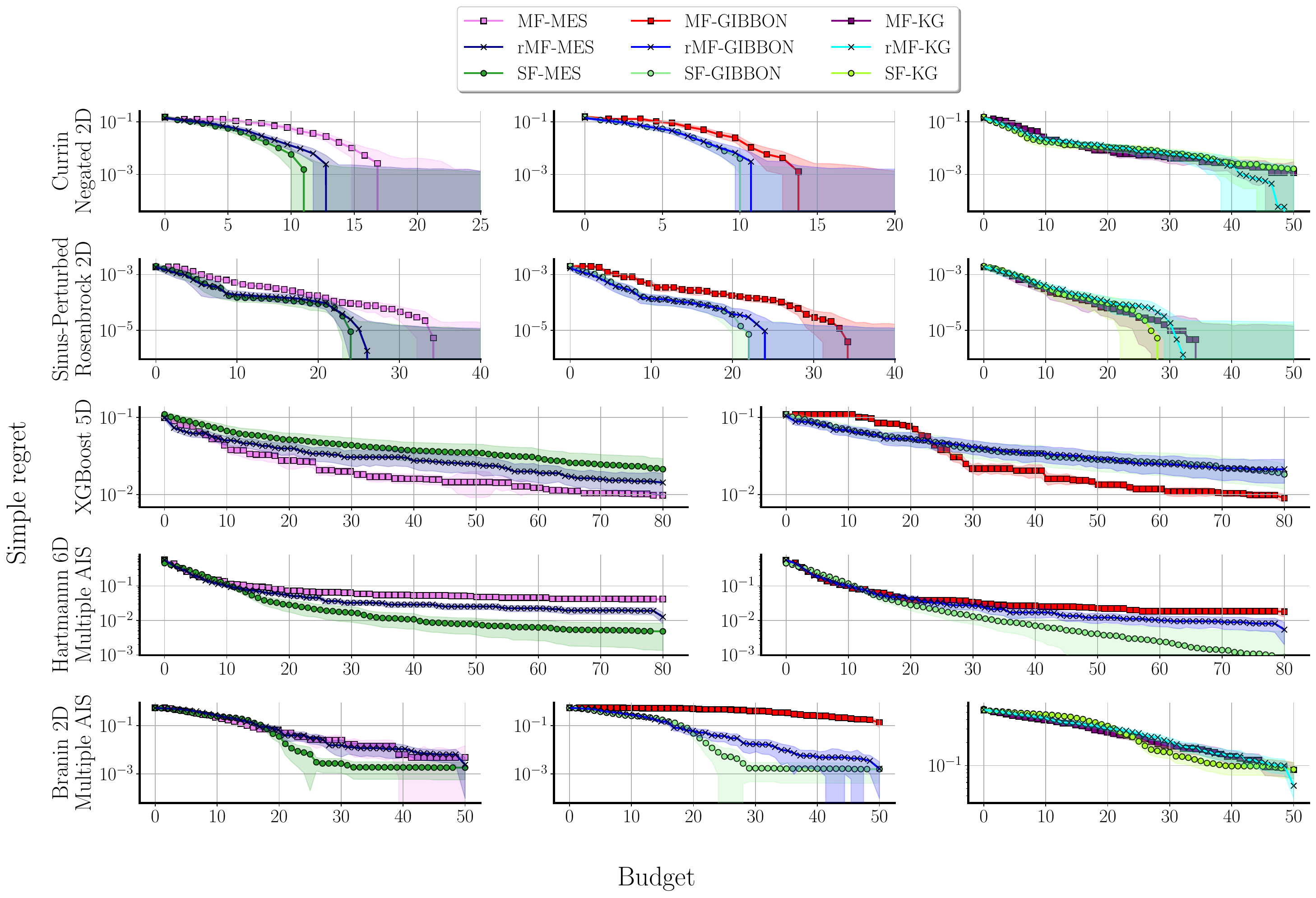}
	\end{center}
	\caption{Simple regret depicted over budget spent in five multi-fidelity problems, averaged over 100 repetitions. The first three problems have one auxiliary IS and the last two have three. Three BO algorithmic families (MES,GIBBON,KG) are tested with their multi- and single-fidelity variants. The proposed \textit{robust} multi-fidelity method is denoted by the letter `r', e.g.\ `rMF-MES'. For rMF methods, the selection of $c_2$ is done using the adaptive strategy proposed in Supplementary Section~\ref{sec:adap}.}
\label{fig:bigmatrixadap}
\end{figure}

\clearpage

\section{Multi-fidelity kernels}\label{supp_mf-kernels}

We here give some insights about the different joint models that can be used in MFBO, as well as some additional numerical experiments using different kernels.

\subsection{Kernels}

\paragraph{The Downsampling kernel \citep[][Supplementary]{wu2019practical}:} Recall that the joint model employed in the experiments from the main text uses the following kernel:
\begin{align*}
    k_{\text{DS}}((\x,\ell),(\x',\ell')) &= k_{\text{input}}(\x, \x') \times k_{\text{IS}}(\ell,\ell')\\
    k_{\text{input}}(\x, \x') &= \exp\left(-\frac{1}{2}\sum_{i=1}^d\frac{(x_i-x_i')^2}{s_i}\right)\\
    k_{\text{IS}}(\ell,\ell') &= c + (1-\ell)^{1+\delta}(1-\ell')^{1+\delta}
\end{align*}
The value $\ell \in [0,1]$ needs to be specified, and represents the confidence we have in the IS, with the primary IS $m$ being associated to $\ell=1$. Figure~\ref{fig:hartmannconfidence} investigates the effect of $\ell$. The hyperparameters $c, \delta$ and $\{s_i\}_{1\le i \le d}$ are obtained through marginal likelihood maximization.
When $\delta=0$,
\begin{align}
    k_{\text{DS}}((\x,\ell),(\x',\ell')) = (c + (1-\ell)(1-\ell'))k_{\text{input}}(\x, \x')
\label{eq:downsamp}
\end{align}
which can be written as
\begin{align*}
    k_{\text{DS}}((\x,\ell),(\x',\ell')) &=  \begin{cases} 
       ck_{\text{input}}(\x, \x') +(1-\ell)(1-\ell')k_{\text{input}}(\x, \x') & \ell \neq 1,~\ell' \neq 1 \\
      ck_{\text{input}}(\x, \x') & \text{otherwise}
   \end{cases}
\end{align*}

\begin{figure}[t]
    \centering
	\includegraphics[width=0.925\linewidth]{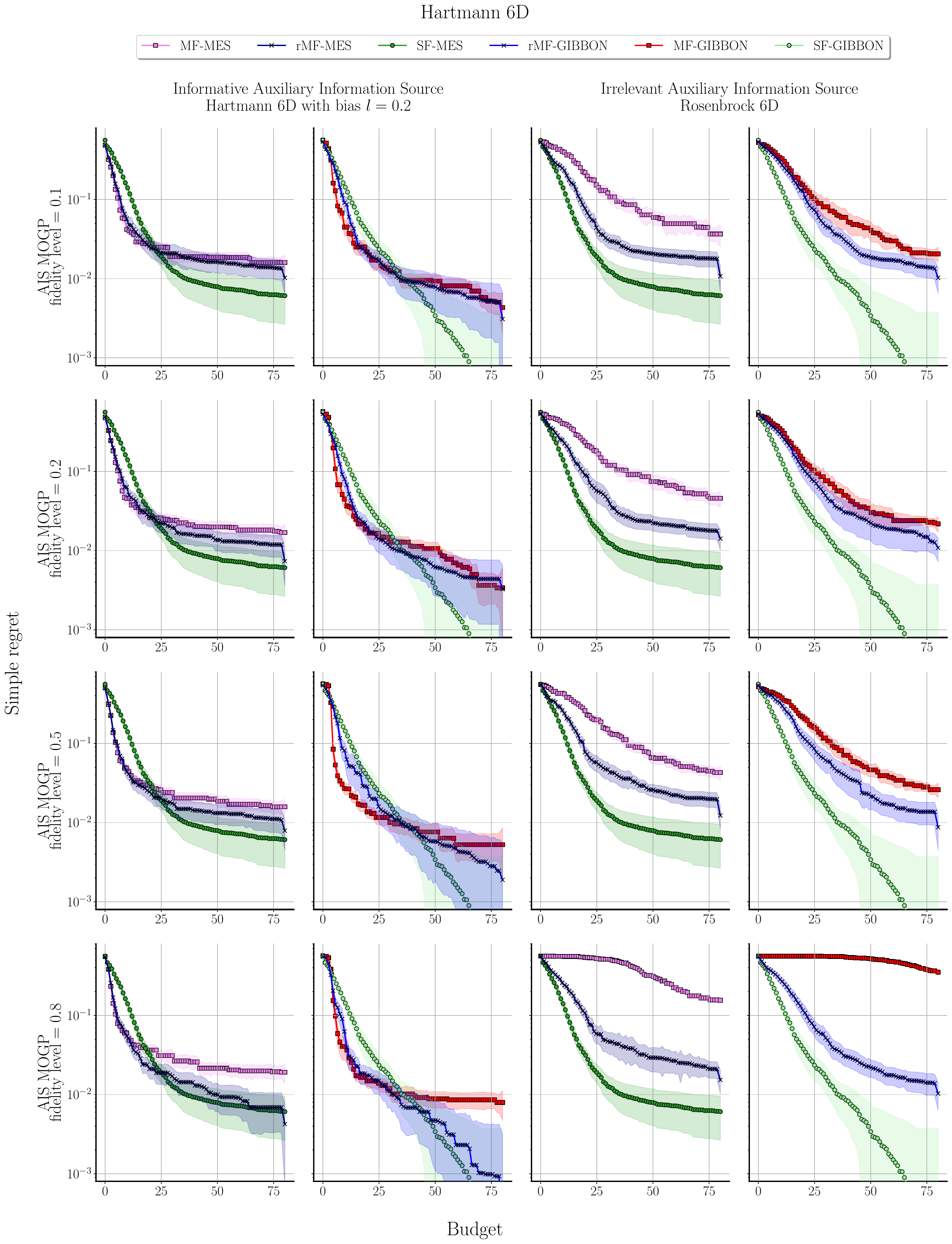}
	\caption{Simple regret depicted over budget spent for the Hartmann6D multi-fidelity problem, averaged over 100 repetitions. For the informative auxiliary IS, the Hartmann function with bias $l=0.2$ is considered (see Section~\ref{sec:testfunctions}). The irrelevant auxiliary IS is the 6D Rosenbrock function. For each row, the confidence level $\ell$ that the MOGP places in the auxiliary IS is varied. The MOGP considered uses the downsampling kernel.}
\label{fig:hartmannconfidence}
\end{figure}

\paragraph{The Linear Truncated kernel:}\label{par:lt}
The Linear Truncated kernel implemented in \href{https://github.com/pytorch/botorch}{BoTorch} reads as
\begin{align}
k_{\text{LT}}((\x,\ell),(\x',\ell')) &= k_{\text{input}}(\x,\x') + c(\ell, \ell')k_{\text{IS}}(\x,\x') \label{eq:lt}\\
c(\ell, \ell')& = (1 - \ell)(1 - \ell')(1 + \ell \ell')^p
\end{align}
where $k_{\text{input}}$ and $k_{\text{IS}}$ are Matern kernels both with $\nu=2.5$, but each with their own lengthscale.
For $p=0$, this leads to
\begin{align*}
k_{\text{LT}}((\x,\ell),(\x',\ell')) &=  \begin{cases} 
      k_{\text{input}}(\x, \x') + (1-\ell)(1-\ell') k_{\text{IS}}(\x, \x') & \ell \neq 1,~\ell' \neq 1\\
      k_{\text{input}}(\x, \x') & \text{otherwise}
   \end{cases}
\end{align*}

This highlights a close correspondence with the Downsampling kernel when the hyperparameters of $k_{\text{IS}}$ are close to that of $k_{\text{input}}$.
Figure~\ref{fig:bigmatrixlt} reproduces the results previously displayed in Figure~\ref{fig:bigmatrix} using now the linear truncated kernel as multiple output gaussian process kernel

\begin{figure}[t]
    \begin{center}
		\includegraphics[width=\textwidth]{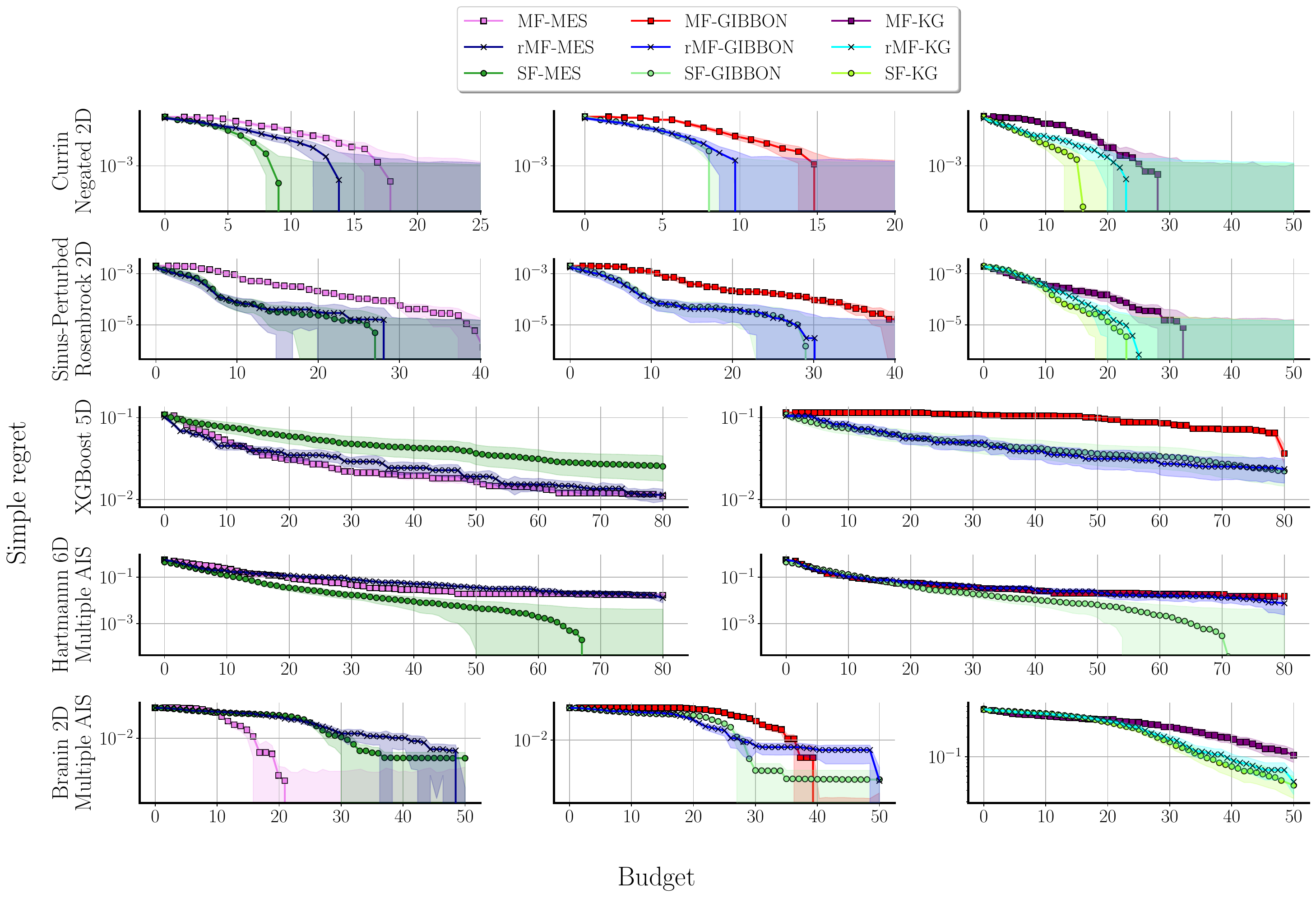}
	\end{center}
	\caption{Simple regret depicted over budget spent in five multi-fidelity problems, averaged over 100 repetitions. The first three problems have one auxiliary IS and the last two have three. Three BO algorithmic families (MES,GIBBON,KG) are tested with their multi- and single-fidelity variants. The proposed \textit{robust} multi-fidelity method is denoted by the letter `r', e.g.\ `rMF-MES'. For MF methods, the joint model kernel used is the linear truncated kernel described in Paragraph~\ref{par:lt}.}
\label{fig:bigmatrixlt}
\end{figure}

\paragraph{The MISO kernel \citep[][p.3]{poloczek2017multi}:}\label{par:miso}
Following our notations, the Multi-Information Source Optimization (MISO) kernel reads as
\begin{equation*}
k_{\text{MISO}}((\x,\ell),(\x',\ell')) = k_\text{input}(\x, \x') + \mathbb{I}(\ell=\ell') k_{\ell}(\x,\x')
\label{eq:misokernel_wrong}
\end{equation*}
where $k_\text{input}$ and $k_\ell$ are similar kernels, e.g.\ both Matern or RBF, but each with their own lengthscale. We assume that there is a typo in the text, and the correct formula should be,
\begin{equation}
k_{\text{MISO}}((\x,\ell),(\x',\ell')) = k_\text{input}(\x, \x') + \mathbb{I}(\ell=\ell'\neq 1) k_{\ell}(\x,\x')
\label{eq:misokernel}.
\end{equation}
Here, $\ell$ and $\ell'$ take categorical values, corresponding to ISs indexes, with $m$ being the primary IS index, equivalent to $\ell=1$ for the Downsampling and Linear Truncated kernels. This can also be written as
\begin{align*}
k_{\text{MISO}}((\x,\ell),(\x',\ell')) &=  \begin{cases} 
      k_\text{input}(\x, \x') + k_\ell(\x, \x') & \ell=\ell'\neq 1 \\
      k_{\text{input}}(\x, \x') & \text{otherwise}
   \end{cases}
\end{align*}

Figure~\ref{fig:bigmatrixmiso} reproduces the results previously displayed in Figure~\ref{fig:bigmatrix} using now the MISO kernel as multiple output gaussian process kernel

\begin{figure}[t]
    \begin{center}
	   \includegraphics[width=\textwidth]{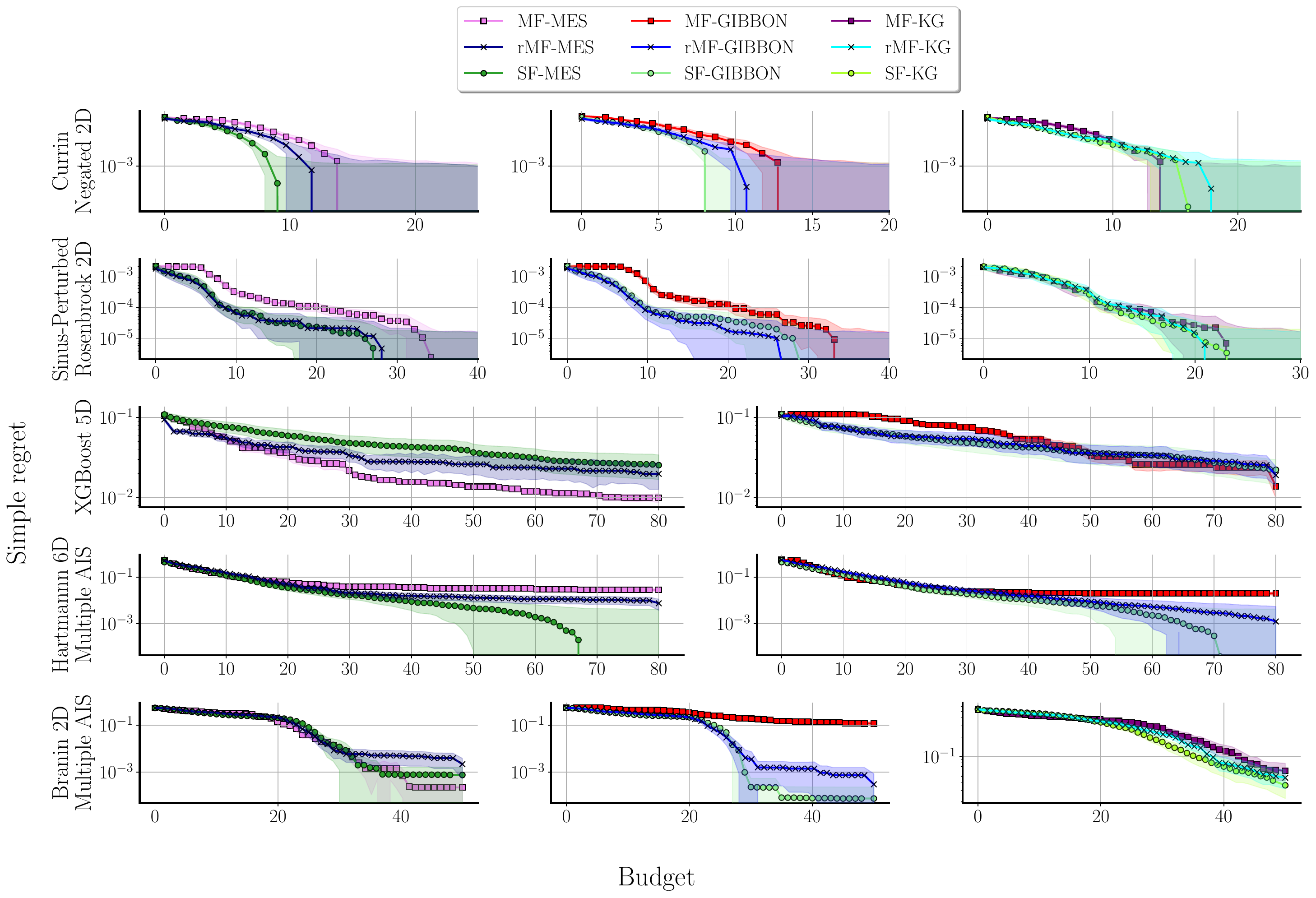}
	\end{center}
	\caption{Simple regret depicted over budget spent in five multi-fidelity problems, averaged over 100 repetitions. The first three problems have one auxiliary IS and the last two have three. Three BO algorithmic families (MES,GIBBON,KG) are tested with their multi- and single-fidelity variants. The proposed \textit{robust} multi-fidelity method is denoted by the letter `r', e.g.\ `rMF-MES'. For MF methods, the joint model kernel used is the MISO kernel described in Paragraph~\ref{par:miso}.}
\label{fig:bigmatrixmiso}
\end{figure}

\clearpage

\section{Experiment details}\label{sec:testfunctions}

\paragraph{Hartmann-6D function:}
\begin{align*}
    f(\x, \l) &= - \sum_{i=1}^4 \alpha_i \exp \left(- \sum_{j=1}^6 A_{ij}(x_j - P_{ij}) \right)\\
    \alpha &= (1.0 - 0.1 (1-l),1.2,3.0,3.2)^T\\
    \bf{A}   &= \begin{pmatrix}
    10&3&17&3.5&1.7&8\\
    0.05&10&17&0.1&8&14\\
    3&3.5&1.7&10&17&8\\
    17&8&0.05&10&0.1&14
    \end{pmatrix}\\
    \bf{P} &= 10^{-4} \begin{pmatrix}
    1312&1696&5569&124&8283&5886\\
    2329&4135&8307&3736&1004&9991\\
    2348&1451&3522&2883&3047&6650\\
    4047&8828&8732&5743&1091&381
    \end{pmatrix}
\end{align*}

defined over $[0,1]^6$, and $l\in[0,1]$ is the degree of fidelity. 
The primary IS is then reached for $l=1$.

\paragraph{Rosenbrock-$d$D function:}
\begin{equation*}
    f(\x) = \sum_{i=1}^{d-1}\left(100(x_{i+1}-x_i^2)^2 + (x_i-1)^2\right)
\end{equation*}

defined over $[-5,5]^d$.
The sinus-perturbed version used in the 2D case is defined as:

\begin{equation*}
    g(\x) = f(\x) + \mathbb{E}[f(X)] \times 0.8 \sin\left(x_1+x_2\right)
\end{equation*}

The expectation is approximated by the empirical mean taken over a grid of $1000\times1000$ points linearly spaced across $[-5,5]^2$.

\paragraph{Exponential Currin 2D function:}

\begin{equation*}
    f(\x) = \left ( 1-\exp\left(-\frac{1}{2x_2}\right)\right) \frac{2300x_1^3 + 1900x_1^2+2092x_1+60}{100x_1^3+500x_1^2+4x_1+20}
\end{equation*}

defined over $[0, 1]^2$.

\paragraph{Branin 2D function:}
\begin{equation*}
    f(\x, l) = \left(x_2 - \left(\frac{5.1}{4\pi^2} - 0.1 (1-l)\right) x_1^2 + \frac{5}{\pi} x_1 - 6\right)^2 + 10 \left(1 - \frac{1}{8\pi}\right) \cos(x_1) + 10
\end{equation*}

defined over $[-5,10]\times[0,15]$, and $l\in[0,1]$ is the degree of fidelity.

\paragraph{Ackley 2D function:}
\begin{equation*}
    f(\x) = -20 \exp \left(-0.2 \sqrt{\frac{1}{2}(x_1 + x_2)^2}\right) - \exp\left(\frac{1}{2}(\cos(2\pi x_1)+\cos(2 \pi x_2) \right) + 20 + e^1
\end{equation*}

defined over $[-5,10]\times[0,15]$.

\paragraph{XGBoost hyperparameter tuning:}

The following hyperparameters are optimized:
Huber loss parameter $\alpha \in [0.01, 0.1]$, complexity parameter used for minimal cost-complexity pruning $([0.01, 100])$, fraction of samples used to fit individual base learners $([0.1,1])$,
fraction of features considered when looking for the best tree split $([0.01, 1])$ and learning rate $([0.001, 1])$.
For the simple regret computation, $f^*$ has been obtained using 30000 evaluations of the primary IS at random points.

\begin{figure}[t]
    \centering
    \includegraphics[width=.75\textwidth]{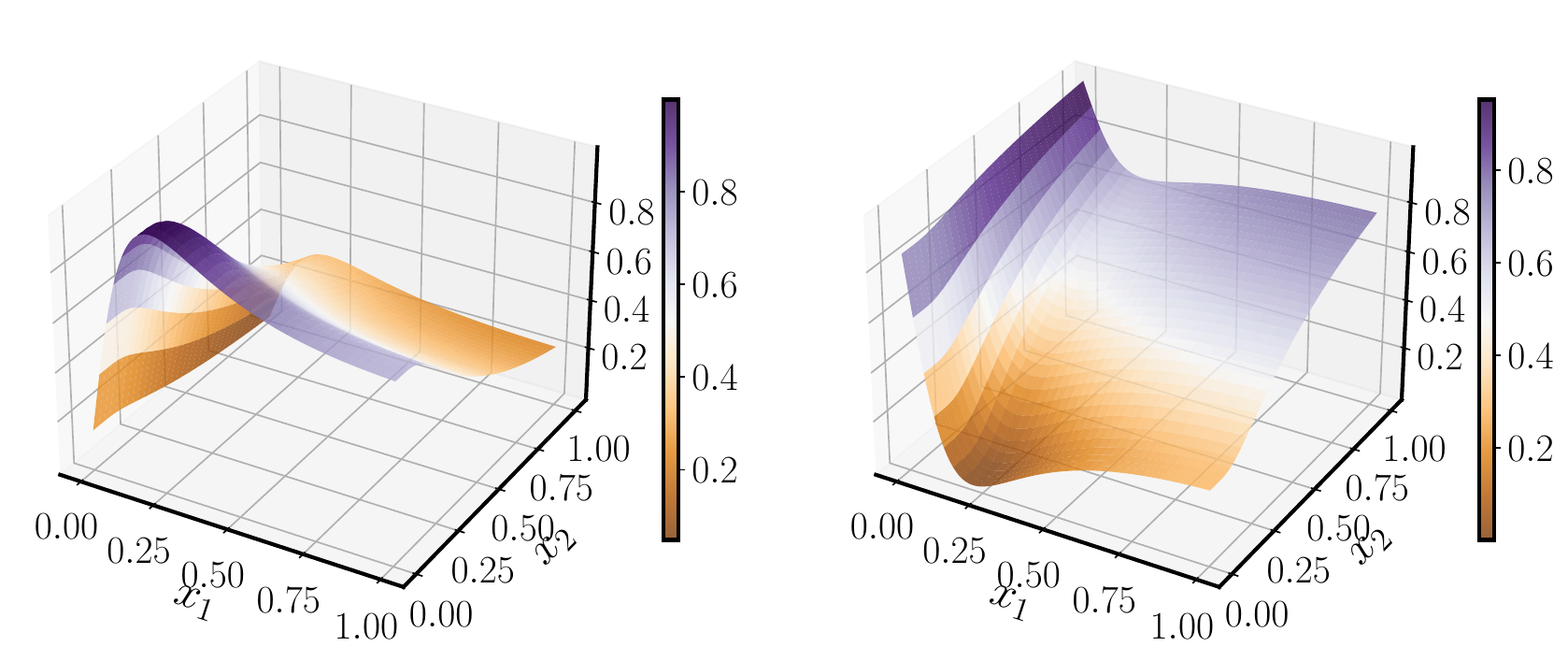}
    \includegraphics[width=.75\textwidth]{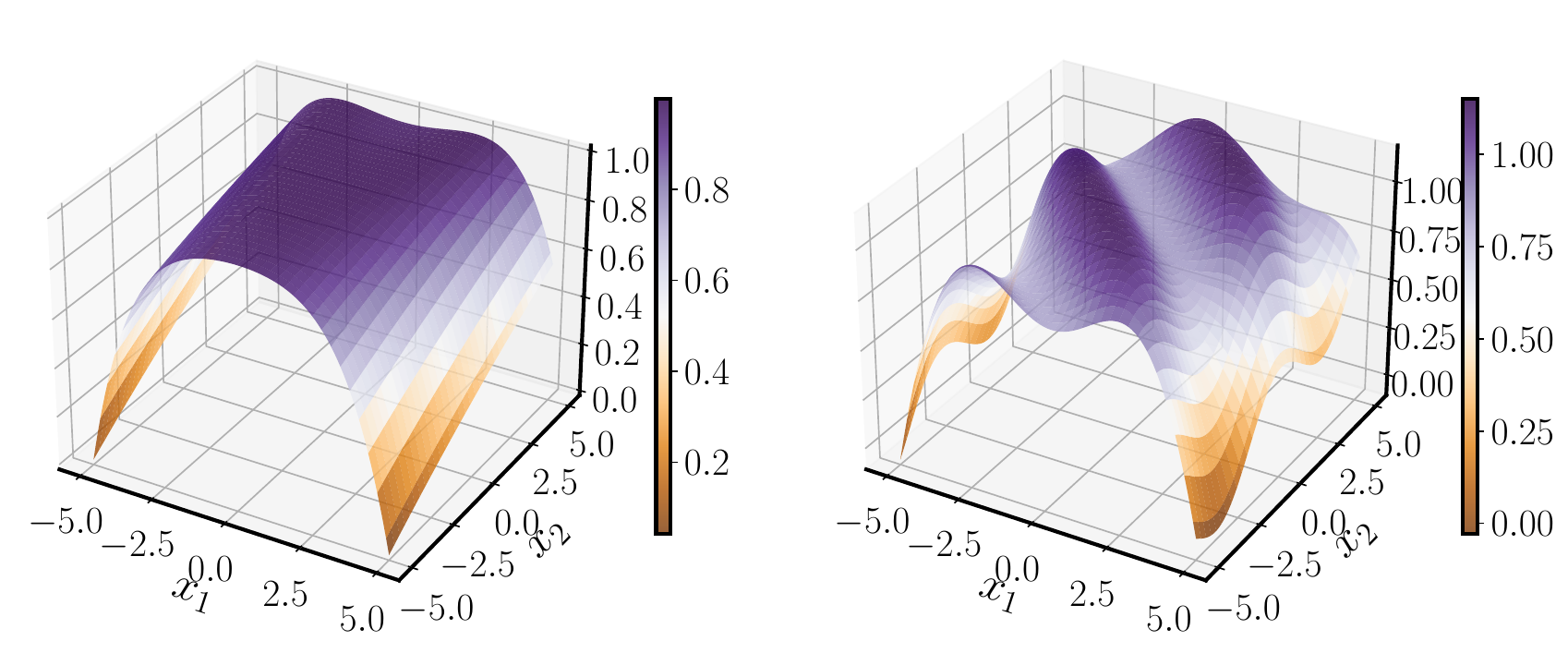}
    \includegraphics[width=.9\textwidth]{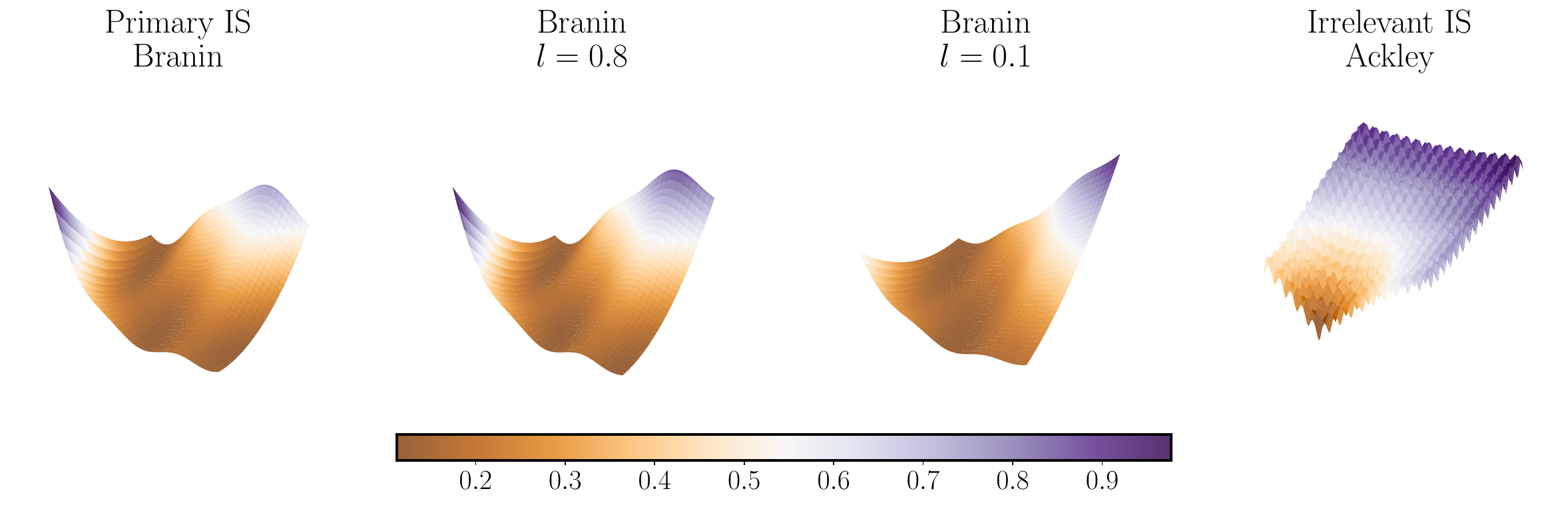}
	\caption{\textbf{Top:} Currin Negated 2D plot of the primary IS (left) and the auxiliary IS (right).
	\textbf{Middle:} Sinus-perturbed Rosenbrock 2D plot of the primary IS (left) and the auxiliary IS (right).
	\textbf{Bottom:} 2D Plot of the Multiple IS Branin problem.}
\label{fig:plots}
\end{figure}

\begin{figure}[t]
    \centering
	\includegraphics[width=\textwidth]{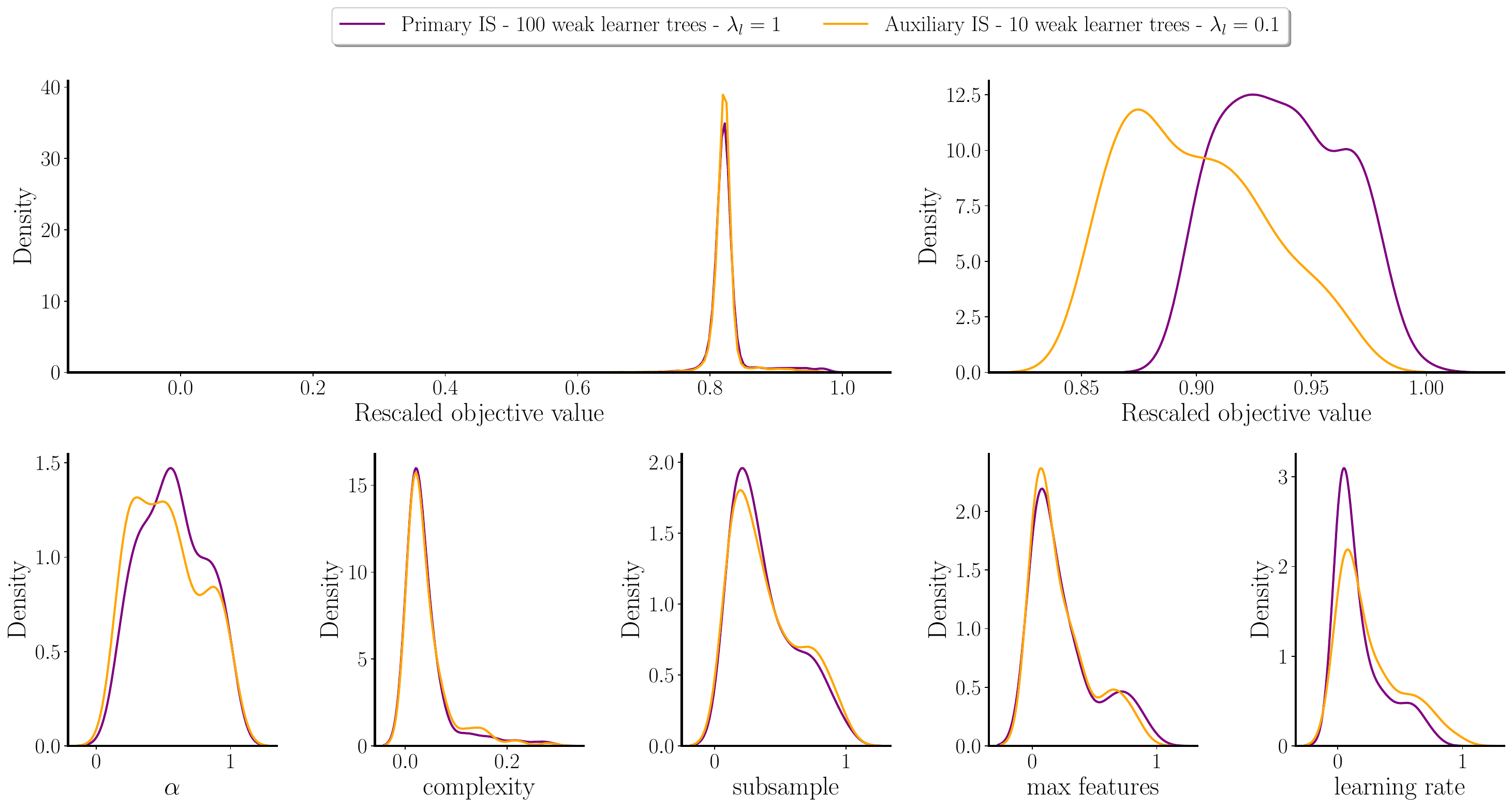}
	\caption{Distribution of rescaled objective values and best samples for the XGBoost 5D hyperparameter tuning problem. For each IS, distributions are computed using 3000 random uniform samples within hyperparameter bounds and a kernel density estimator. \textbf{Top left:} the whole rescaled objective values distribution. \textbf{Top right:} density plot of the 5\% best values. \textbf{Bottom:} density plot of the hyperparameters samples associated with 5\% best values, demonstrating the strong agreement between the primary and the auxiliary IS.}\label{fig:xgbdist}
\end{figure}

\end{document}